\documentclass[runningheads]{llncs}

\usepackage{eccv}

\usepackage{eccvabbrv}

\usepackage{graphicx}
\usepackage{booktabs}
\usepackage{amsmath}
\usepackage{amssymb}
\usepackage{mathtools}
\usepackage{algorithm}
\usepackage{algpseudocode}
\usepackage{tcolorbox}
\usepackage{wrapfig}
\usepackage{bbm}
\usepackage{array}
\tcbuselibrary{listings,breakable}
\usepackage{thm-restate}
\usepackage{multirow}
\usepackage{color, colortbl}
\usepackage{pifont}

\definecolor{Gray}{gray}{0.9}
\definecolor{LightBlue}{rgb}{0.85, 0.90, 0.94}
\newcommand{\xmark}{\textcolor{black}{\ding{55}}}
\newcommand{\cg}{\cellcolor{LightBlue}}

\newcommand{\xL}{\vx_{L}}

\newcommand{\xH}{\vx_{H}}

\newcommand{\hxH}{\hat{\vx}_{H}}

\usepackage{amsmath,amsfonts,bm}

\def\eqref#1{equation~\ref{#1}}

\def\1{\bm{1}}

\def\vc{{\bm{c}}}

\def\ve{{\bm{e}}}

\def\vv{{\bm{v}}}
\def\vw{{\bm{w}}}
\def\vx{{\bm{x}}}

\def\vz{{\bm{z}}}

\DeclareMathAlphabet{\mathsfit}{\encodingdefault}{\sfdefault}{m}{sl}
\SetMathAlphabet{\mathsfit}{bold}{\encodingdefault}{\sfdefault}{bx}{n}

\def\gS{{\mathcal{S}}}

\def\sR{{\mathbb{R}}}

\newcommand{\epsilonb}{{\boldsymbol \epsilon}}

\usepackage{capt-of}
\usepackage{diagbox}

\definecolor{C0}{rgb}{0.121569, 0.466667, 0.705882}
\definecolor{C1}{rgb}{1.000000, 0.498039, 0.054902}
\definecolor{C2}{rgb}{0.172549, 0.627451, 0.172549}
\definecolor{C3}{rgb}{0.839216, 0.152941, 0.156863}
\definecolor{C4}{rgb}{0.580392, 0.403922, 0.741176}
\definecolor{C5}{rgb}{0.549020, 0.337255, 0.294118}
\definecolor{C6}{rgb}{0.890196, 0.466667, 0.760784}
\definecolor{C7}{rgb}{0.498039, 0.498039, 0.498039}
\definecolor{C8}{rgb}{0.737255, 0.741176, 0.133333}
\definecolor{C9}{rgb}{0.090196, 0.745098, 0.811765}
\definecolor{trolleygrey}{rgb}{0.5, 0.5, 0.5}

\definecolor{BrickRed}{rgb}{0.6,0,0}
\definecolor{RoyalBlue}{rgb}{0,0,0.8}
\definecolor{Tdgreen}{rgb}{0,0.4,0.7}
\definecolor{pinegreen}{rgb}{0.0, 0.47, 0.44}
\definecolor{cornellred}{rgb}{0.7, 0.11, 0.11}
\definecolor{cadmiumgreen}{rgb}{0.0, 0.42, 0.24}
\definecolor{spirodiscoball}{rgb}{0.06, 0.75, 0.99}
\definecolor{mylightblue}{rgb}{0.85, 0.90, 0.94}
\definecolor{maroon}{cmyk}{0,0.87,0.68,0.32}

\usepackage{pifont}
\definecolor{cfg}{rgb}{0.906, 0.435, 0.318}
\definecolor{cfgpp}{rgb}{0.165, 0.616, 0.561}
\definecolor{cfgnull}{rgb}{0.208, 0.565, 0.953}

\newcommand{\tiledxt}{{\vx_t^{(i)}}}
\newcommand{\tiledxl}{{\vx_L^{(i)}}}
\newcommand{\tiledxh}{{\vx_H^{(i)}}}
\newcommand{\tiledxzero}{{\vx_0^{(i)}}}
\newcommand{\deltati}{{\Delta_{t}^{(i)}(\vc)}}

\newcommand{\idealc}{{\vc^{\ast}}}
\newcommand{\vcglobal}{{\vc_\text{global}}}
\newcommand{\vclocal}{{\vc_\text{local}^{(i)}}}

\newcommand{\idealdeltati}{{\Delta_{t}^{(i)}(\idealc)}}

\newcommand{\deltac}{{\delta_i(\vc)}}

\newcommand{\eb}{\mathbf{e}}

\def\eqref#1{Eq.~(\ref{#1})}

\definecolor{tp}{rgb}{0.906, 0.435, 0.318}
\definecolor{tp2}{rgb}{0.165, 0.616, 0.561}
\definecolor{tp3}{rgb}{0.208, 0.565, 0.953}

\usepackage[accsupp]{axessibility}  

\usepackage[pagebackref,breaklinks,colorlinks,citecolor=eccvblue]{hyperref}

\usepackage{orcidlink}

\begin{document}

\title{Tiled Prompts: Overcoming Prompt Misguidance in Image and Video Super-Resolution}

\titlerunning{Tiled Prompts}

\author{Bryan Sangwoo Kim$^*$ \and
Jonghyun Park$^*$ \and
Jong Chul Ye}

\authorrunning{B. S. Kim et al.}

\institute{KAIST AI \\
\email{\{bryanswkim, jhpark99, jong.ye\}@kaist.ac.kr} \\
{\fontsize{8}{10}\selectfont * Equal contribution}
}

\maketitle

\begin{figure}
    \centering
    \includegraphics[width=.9\linewidth]{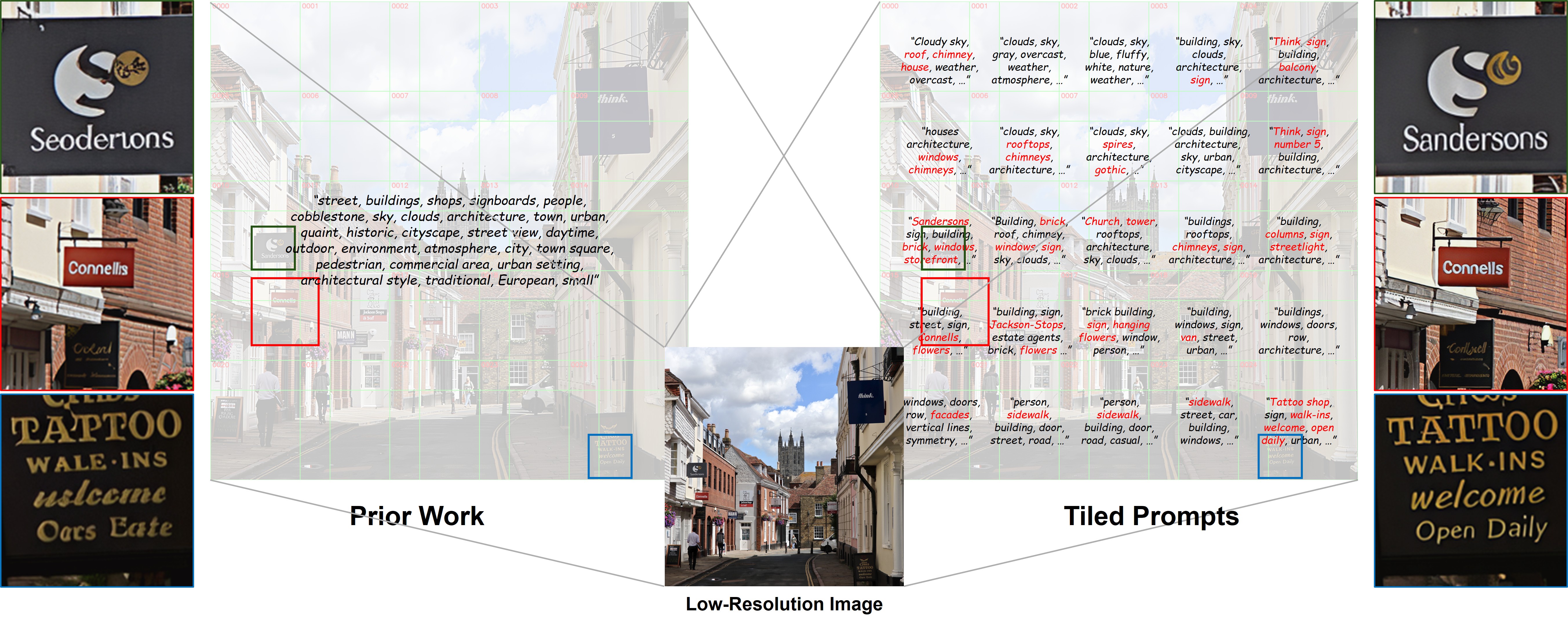}
    \caption{(Left) Relying on a single, global prompt for the latent tiling strategy during super-resolution causes \textit{prompt misguidance}, leading to suboptimal reconstructions. (Right) Using \textit{tiled prompts} solves ambiguity and provides accurate localized guidance needed to reconstruct high-quality details. For example, text on signs are correctly generated only when their corresponding prompts are provided.}
    \label{fig:teaser}
    \vspace{-0.2cm}
\end{figure}

\begin{abstract}
  Text-conditioned diffusion models have advanced image and video super-resolution by using prompts as semantic priors, and modern super-resolution pipelines typically rely on latent tiling to scale to high resolutions. In practice, a single global caption is used with the latent tiling, often causing \textit{prompt misguidance}.
  Specifically, a coarse global prompt often misses localized details (errors of omission) and provides locally irrelevant guidance (errors of commission) which leads to substandard results at the tile level. To solve this, we propose \textit{Tiled Prompts}, a unified framework for image and video super-resolution that generates a tile-specific prompt for each latent tile and performs super-resolution under locally text-conditioned posteriors to resolve prompt misguidance with minimal overhead.
  Our experiments on high resolution real-world images and videos show that tiled prompts bring consistent gains in perceptual quality and fidelity, while reducing hallucinations and tile-level artifacts that can be found in global-prompt baselines.
  Project Page: \url{https://bryanswkim.github.io/tiled-prompts/}.
  
  \keywords{Super-resolution \and Diffusion models \and VLMs}
\end{abstract}

\begin{figure}[t]
    \centering
    \includegraphics[width=\linewidth]{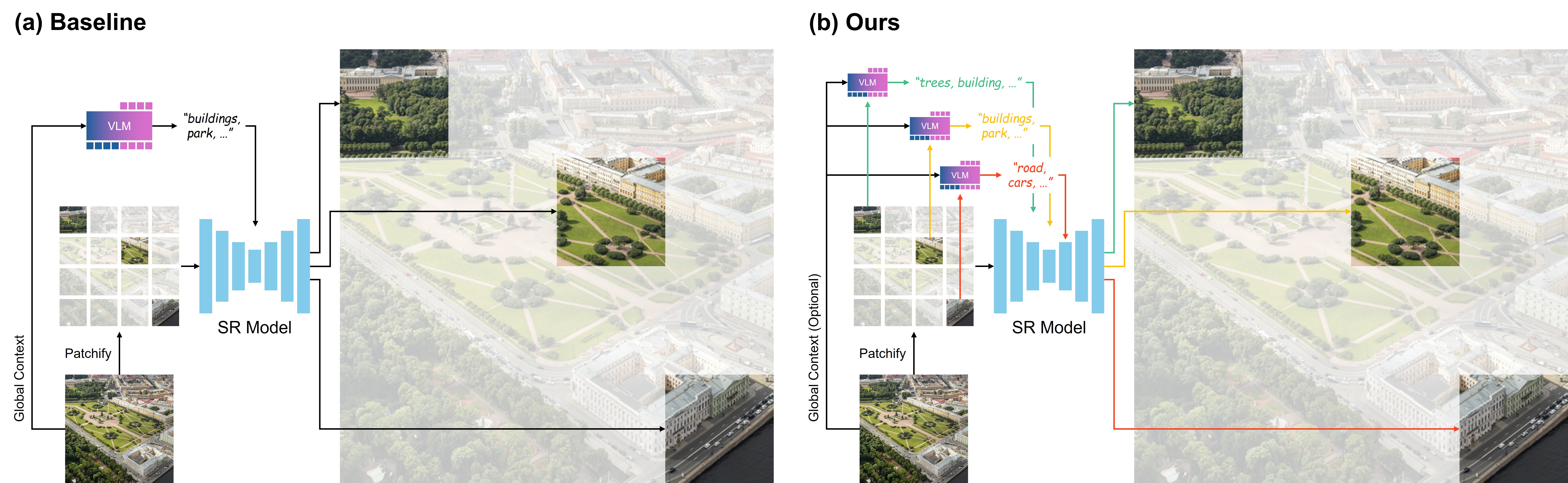}
    \caption{\textbf{(a) Baseline Methods}: Conditioning super-resolution models solely on a single global text prompt demonstrates the problem of prompt misguidance. The global prompt, while broadly describing the image, proves insufficient to constrain the fine-grained super-resolution process. \textbf{(b) Our Method (Tiled Prompts)}: Our framework leverages dense, context-aware \textit{tiled prompts} for each region. This localized textual guidance provides precise semantic anchors, enabling the super-resolution model to produce significantly sharper, more coherent, and perceptually richer details.}
    \label{fig:img-pipeline}
\end{figure}

\section{Introduction}
\label{sec:introduction}

Single Image Super-Resolution (SISR) aims to recover a high-resolution (HR) representation from a low-resolution (LR) input. As a fundamental low-level vision task, it is crucial for a wide range of applications, from enhancing legacy media to improving scientific visualization and medical imaging systems~\cite{betzig2006imaging, oktay2016multi, ravishankar2010mr, wang2022comprehensive}. However, SISR is an ill-posed inverse problem; for any given LR input, there exists a multitude of plausible HR solutions.
Modern deep learning approaches typically address this by learning the posterior probability distribution
\begin{equation}
    p(\xH \mid \xL)
    \label{eq:standardSR}
\end{equation}
from large-scale datasets, where the LR input $\xL$ is mapped to its HR reconstruction $\xH$.
When properly trained, these models learn to effectively constrain the solution space, ensuring that the generated outputs are perceptually realistic.

Recent methods for super-resolution have significantly advanced this field by incorporating pre-trained generative models (\textit{e.g.}, diffusion models), exploiting their rich generative priors for the super-resolution task~\cite{wang2024exploiting, wu2024seesr, yu2024scaling, wu2024one}. A key advantage of these generative frameworks is their ability to leverage textual conditioning. Since modern generative backbones are trained on diverse image-text pairs~\cite{rombach2022high, esser2024scaling}, they naturally allow for text-guided generation. This effectively reformulates \eqref{eq:standardSR} by conditioning the posterior probability distribution on additional textual guidance as follows:
\begin{equation}
    p(\xH \mid \xL, \vc_\text{global})
    \label{eq:textSR}
\end{equation}
where $\vc_\text{global}$ is a textual condition for providing global context about the input image.
These textual priors are critical in creating high-frequency details and serving as powerful semantic anchors that guide the reconstruction toward plausible outcomes~\cite{wu2024one, kim2025chain}.

\begin{wrapfigure}{r}{0.5\textwidth}
    \includegraphics[width=0.45\textwidth]{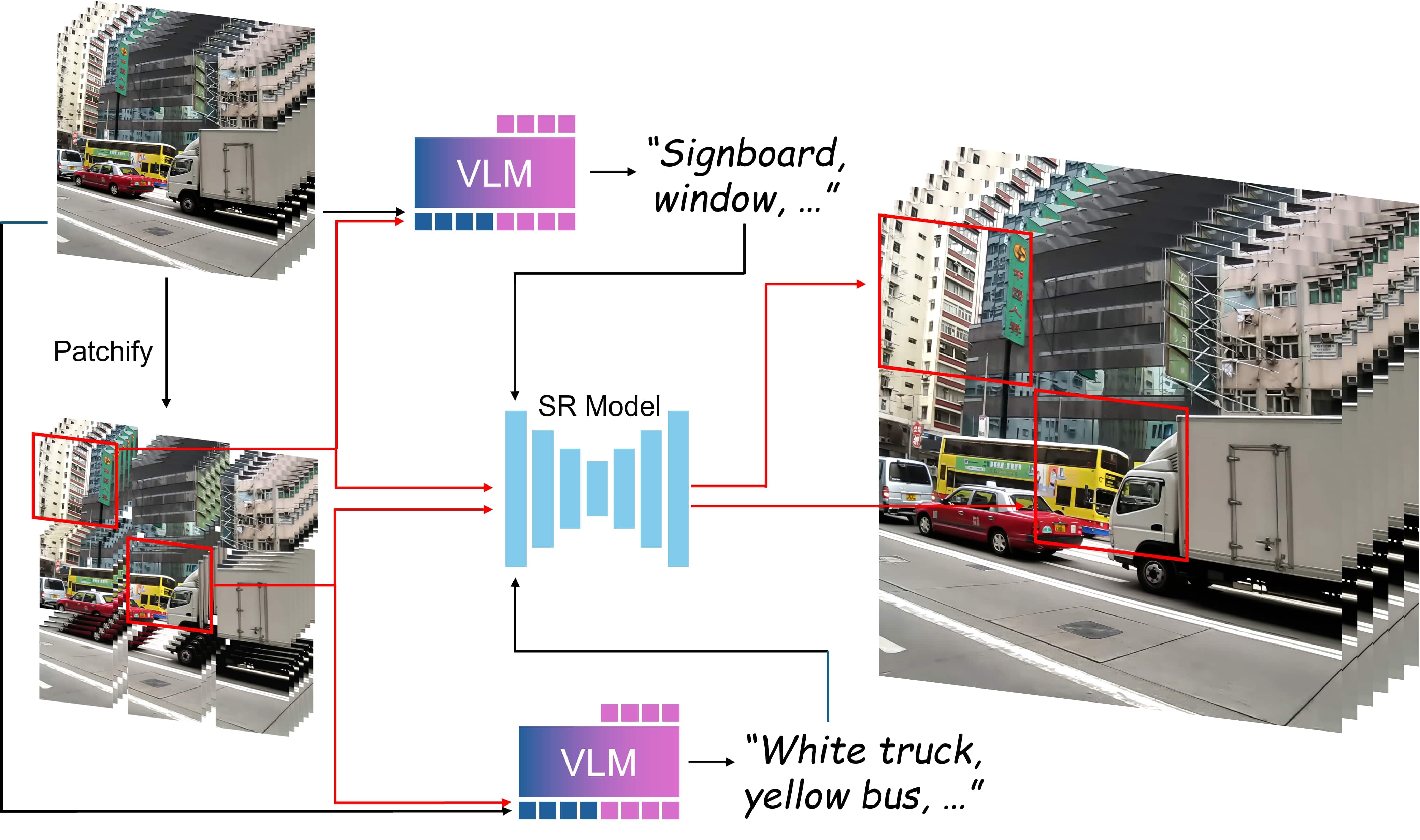}
    \caption{Our \textbf{Tiled Prompts} framework for VSR divides the low-resolution video into a grid of spatio-temporal blocks (or volumes), where each block is tiled both spatially and temporally. A VLM then analyzes the local video content and generates a detailed text prompt to guide the reconstruction of the specific block.}
    \label{fig:video-pipeline}
    \vspace{-1cm}
\end{wrapfigure}

This trend toward text-conditioned super-resolution extends naturally to Video Super-Resolution (VSR), currently an active area of research.
Text conditioning in VSR can be expressed by simply replacing the image pair $\{ \xH, \xL \}$ with a temporally dependent sequence pair $\{ \xH^{1:N}, \xL^{1:N} \}$, which enables the joint super-resolution of the $N$ frames comprising a video:
\begin{equation}
    p(\xH^{1:N} \mid \xL^{1:N}, \vc_\text{global})
    \label{eq:textSR-video}
\end{equation}
Analogous to SISR, text-conditioning for VSR is important for reconstruction of high-frequency details.

However, as interest in increasingly high-resolution imagery continues to grow, standard text-conditioning paradigms face a critical limitation. Modern super-resolution methods typically rely on a \textit{latent tiling strategy}~\cite{jimenez2023mixture, bartal2023multidiffusionfusingdiffusionpaths} of processing small, overlapping tiles rather than full-sized latents, to mitigate the roughly quadratic growth in VRAM usage as resolution scales.
In this tiled regime, we encounter the problem of \textit{prompt misguidance}: textual guidance derived from a single global caption $\vc_\text{global}$ often lacks the specific localized details needed to reconstruct individual tiles faithfully.
At extreme resolutions (\textit{e.g.}, 4K super-resolution), the number of tiles can easily reach on the order of hundreds.
This semantic misalignment between the global prompt and local visual content inhibits the fine-grained textual guidance needed for plausible SR, a limitation that arises commonly in both SISR and VSR.

In this work, we propose \textit{Tiled Prompts}: a unified framework to overcome prompt misguidance in both image and video super-resolution. Rather than relying on a single, often irrelevant global prompt $\vc_\text{global}$, our approach generates a dense, descriptive prompt $\vc_\text{local}^{(i)}$ specific to each \textit{i-th} tile, used for the super-resolution from $\tiledxl$ to $\tiledxh$:
\footnote{For simplicity, we unify both image and video cases with the notations $\xL\in\sR^{T\times H\times W\times C}$ and $\xH\in\sR^{T\times rH\times rW\times C}$ for LR input and HR output respectively ($T=1$ corresponds to SISR).}
\begin{equation}
    p(\tiledxh \mid \tiledxl, \vclocal)
    \label{eq:textSR-local}
\end{equation}
We give theoretical insight into how these local prompts reduce ambiguity and hallucination, thereby resolving prompt misguidance. Specifically, the proposed framework solves \textit{errors of omission} by supplementing prompts that can be overlooked when generating a single global prompt (\textit{e.g.}, the red prompts in Fig.~\ref{fig:teaser}), and solves \textit{errors of commission} by eliminating highly irrelevant prompts that can induce off-target semantics.

The proposed framework is highly effective for alleviating prompt misguidance in both SISR and VSR. Though VSR shares the same fundamental goal as SISR, VSR demands the extra challenge of enforcing consistency along an additional temporal axis. This added dimension significantly exacerbates the misguidance problem, making it even more inadequate for a single global prompt to be used to explain an entire video. We demonstrate that tiled prompts capture dynamic content, scene changes, and localized details that are otherwise overlooked, all the while introducing minimal computational overhead.
%
Our contributions can be summarized as follows:
\begin{enumerate}
    \item We define the \textit{prompt misguidance} problem found when using latent tiling, and analyze its negative impacts for super-resolution. In particular, we introduce two types of semantic deviation observed and categorize them as \textit{errors of commission} and \textit{errors of omission}.
    \item We introduce a unified framework to mitigate prompt misguidance that utilizes \textit{tiled prompts} for localized textual guidance, and formally show that misguidance can be minimized by lowering an information-theoretic bound.
    \item Experimentally, we show that tiled prompts consistently improve performance in both SISR and VSR settings, even with negligible overhead.
\end{enumerate}

\section{Related Work}
\label{sec:related-work}

\subsection{Text-guided Image Super-Resolution}

State-of-the-art models incorporate pre-trained generative models, and fine-tune their generative priors for super-resolution. During this process, textual priors are used to guide high frequency details of the super-resolution process. 
StableSR~\cite{wang2024exploiting} employs a diffusion prior without any prompt, while SUPIR~\cite{yu2024scaling} trains the model with detailed text prompts to enrich semantic guidance.
SeeSR~\cite{wu2024seesr} introduces a Degradation-Aware Prompt Extractor (DAPE) that injects semantically rich prompts to encourage finer detail generation. 
OSEDiff~\cite{wu2024one} and PiSA-SR~\cite{sun2025pixel} further accelerate the SR process into a one-step denoising framework, while using text guidance for fine details.
More recently, TeReDiff~\cite{min2025text} proposes Text-Aware Image Restoration (TAIR), which leverages explicit textual cues extracted from the image to faithfully recover text that is otherwise ambiguous in low-quality inputs.

As the role of textual guidance becomes increasingly prominent, the super-resolution community has begun to incorporate Vision-Language Models (VLMs) to leverage their visual understanding capabilities. Recent baselines such as SUPIR~\cite{yu2024scaling} and DiT4SR~\cite{duan2025dit4sr} use LLAVA~\cite{liu2023visual} as a prompt extractor. Chain-of-Zoom~\cite{kim2025chain} studies extreme super-resolution with VLMs, showing that scale autoregression, coupled with carefully structured text guidance, can progressively improve SR quality through iterative magnification.
However, to the best of our knowledge, prior work has not explored using VLMs to provide effective guidance across the full spatial extent of high-resolution images during tiled SR inference. In this work, we address prompt misguidance by incorporating tile-level prompting via VLMs to supply localized guidance.

\subsection{Video Super-Resolution}
Recent works on Video Super-Resolution (VSR) aim to fully exploit the generative capability of diffusion-based models by incorporating text-to-image (T2I) and text-to-video (T2V) priors.
%
%
Upscale-A-Video~\cite{zhou2023upscale} and MGLD-VSR~\cite{yang2024motion} fine-tune pretrained T2I models by leveraging optical flow between adjacent frames.
While effective to some extent, such fine-tuning compromises the diffusion model’s inherent image generation fidelity.
DLoRAL~\cite{sun2025one} introduces a dual-LoRA framework to preserve generative quality while improving temporal stability, but temporal inconsistency remains a problem for T2I-based VSR approaches.

Meanwhile, leveraging T2V priors for VSR helps alleviate temporal inconsistency, and various methods have been explored to improve per-frame fidelity. STAR~\cite{xie2025star} augments T2V backbones (\textit{e.g.}, I2VGen-XL~\cite{2023i2vgenxl}, CogVideoX~\cite{yang2025cogvideox}) with a lightweight local-detail enhancement module to better preserve fine structures, along with a frequency-aware objective for improved fidelity. SeedVR~\cite{wang2025seedvr} proposes a scalable diffusion-transformer backbone for generic video restoration via (shifted) window attention, and SeedVR2~\cite{wang2025seedvr2} improves this through diffusion adversarial post-training. Stream-DiffVSR~\cite{shiu2025streamdiffvsr} introduces a causally conditioned diffusion framework for low-latency online VSR that operates only on past frames.
However, despite that these models are dependent on textual guidance for accurate reconstruction, none of these prior works fully validate the correctness of the prompts that are used.
Thus, in this work we reveal that these models are actually prone to prompt misguidance when used for high resolutions.

\section{Preliminaries}
\label{sec:preliminaries}

\subsection{Latent Tiling during Inference of SR Models}
\label{sec:latent-tiling}
When input LR data exceeds the optimal processing size of the SR model, current methods split the input into a grid of tiles and perform SR at the tile-level to satisfy memory and compute constraints.
This \textit{latent tiling} strategy is not unique to super-resolution; it is widely used in high-resolution generative modeling (\textit{e.g.}, MultiDiffusion~\cite{bartal2023multidiffusionfusingdiffusionpaths}, Mixture of Diffusers~\cite{jimenez2023mixture}), where overlapping tiles are processed independently and then merged to produce a full-resolution sample.
For latent tiling size of $t\times k_1\times k_2$, $\xL$ is split into a grid of tiles,
\begin{equation}
    \left\{ \xL^{(i)} \right\}_{i=1}^N, \quad \quad \xL^{(i)} \in \sR^{t\times k_1\times k_2\times C}
    \label{eq:gridLR}
\end{equation}
where $\xL^{(i)}$ corresponds to an HR region $R^{(i)} \subset \{1,\cdots ,T\}\times\{1,\cdots,rH\} \times \{1,\cdots,rW\}$ of size $t\times (rk_1)\times(rk_2)$ at the scaled location.
In practice, tiling is performed in the model's latent space rather than directly in pixel space, but we retain the notation in \eqref{eq:gridLR} to denote these latent tiles for simplicity.

Given a global text condition $\vc_\text{global}$, the text-conditioned SR model trained on the conditional probability distribution of \eqref{eq:textSR} operates \textit{per-tile} as follows:
\begin{equation}
    \hxH^{(i)} \sim f_\theta(\xH^{(i)} \mid \xL^{(i)}, \vc_\text{global})
    \label{eq:patchSR}
\end{equation}
where $f_\theta$ denotes the SR model with parameters $\theta$.
The resulting full HR output is the direct placement of predicted tiles onto their HR grid cells:
\begin{equation}
    \hxH(t,h,w)=\sum_{i=1}^N{\mathbbm{1}_{\{ (t,h,w) \in R_i \}} \hxH^{(i)}(\phi_i(t,h,w)) }
    \label{eq:gridHR}
\end{equation}
where $\mathbbm{1}_{\{ (t,h,w) \in R_i \}}$ is the indicator for region $R_i$, and $\phi_i$ maps global HR coordinates $(t,h,w)\in R_i$ to local coordinates within tile $i$ of size $t\times(rk_1)\times(rk_2)$.
Equivalently, the global posterior is approximated as a product of disjoint tile-wise conditionals:
\begin{equation}
    p(\xH \mid \xL, \vc_\text{global}) \approx \prod_{i=1}^N{p_i\left( \xH^{(i)} \mid \xL^{(i)}, \vc_\text{global} \right)}
    \label{eq:global-posterior}
\end{equation}
where $\hxH$ is obtained by the aggregation rule in Eq.~\ref{eq:gridHR}.

In practice, direct aggregation often introduces boundary discontinuities, so a standard practice is to overlap tiles with Gaussian blending~\cite{liang2021swinir}. Let $w_i(h,w) \geq 0$ be a Gaussian weighting window supported on $R_i$ (and zero elsewhere). The aggregated HR estimate is then given by:
\begin{equation}
    \hxH(h,w) = \frac{\sum_{i=1}^N {w_i(h,w) \hxH^{(i)}(\phi_i(h,w))}}{\sum_{i=1}^N {w_i(h,w)}}
\end{equation}

\subsection{Unified Forward Process and Text Guidance}
\label{sec:unified_forward}
To encapsulate a wide range of continuous-time generative models including diffusion models~\cite{ho2020denoising, song2020score, song2020denoising}, and flow-based models~\cite{lipman2022flow, liu2023flow} within a unified framework, we define the forward process via its marginal distribution:
\begin{equation}
    p_t(\tiledxt | \tiledxzero) = \mathcal{N}(\tiledxt; \alpha_t \tiledxzero, \sigma_t^2 \mathbf{I})
\end{equation}
where $\alpha_t$ and $\sigma_t$ define the specific noise schedule or flow trajectory. For general diffusion models, this encompasses Variance Preserving (VP) ($\alpha_t^2 + \sigma_t^2 = 1$), Variance Exploding (VE) ($\alpha_t = 1$), and generalized formulations like EDM~\cite{karras2022elucidating}. For linear flow-based models, the trajectory follows $\alpha_t = 1-t$ and $\sigma_t = t$.

For the super-resolution task, conditional diffusion models are often modeled as noise prediction networks that take both the LR tile $\tiledxl$ and the text prompt $\vc$ as conditions. For such cases, Classifier-Free Guidance (CFG)~\cite{ho2022classifier} is utilized as a standard inference mechanism to increase adherence to the text condition without modifying model parameters:
\begin{equation}
    \begin{aligned}
    \tilde{\eb}_\theta(\tiledxt | \tiledxl, \vc)
    &= \eb_\theta(\tiledxt | \tiledxl, \varnothing)
    + s [ \eb_\theta(\tiledxt | \tiledxl, \vc) - \eb_\theta(\tiledxt | \tiledxl, \varnothing) ] \\
    &= \eb_\theta(\tiledxt | \tiledxl, \varnothing)
    + s\deltati,
    \end{aligned}
\label{eq:cfg}
\end{equation}
where $\eb$ is the model prediction (\textit{e.g.}, noise $\epsilonb_\theta$ or velocity $\vv_\theta$), $s>1$ is the guidance scale, and $\deltati \coloneqq \eb_\theta(\tiledxt | \tiledxl, \vc) - \eb_\theta(\tiledxt | \tiledxl, \varnothing) $ is the guidance direction.
The guidance direction $\deltati$ steers the generative trajectory according to the semantics implied by the given text condition $\vc$.

\clearpage
\section{Tiled Prompts}
\label{sec:methodology}

\subsection{Prompt Misguidance}
\label{sec:methodology-misguidance}

Existing diffusion-based SR methods generally extract a single global prompt $\vc_\text{global}$ from the full LR input $\xL\in\sR^{T\times H\times W\times C}$ and reuse it for all local tiles $\xL^{(i)}$:
\begin{equation}
    \label{eqn:general_SR}
    \hxH^{(i)} \sim f_\theta(\xH^{(i)} \mid \xL^{(i)}, \vcglobal), \quad \vcglobal\sim p(\vc|\xL).
\end{equation}
While $\vcglobal$ captures the overall scene context, it acts as a coarse constraint when applied to local tiles. Thus, tile-specific attributes are often omitted in the global prompt, while irrelevant details that do not appear in the \textit{i-th} tile are included, leading to \textit{prompt misguidance}.

We formally define prompt misguidance by first assuming the existence of an ideal textual condition $\idealc\sim p^*(\idealc|\tiledxl)$ that perfectly describes the local visual evidence for the $i$-th tile. We define the \textit{prompt misguidance} $\deltac$ as follows:
\begin{align}
    \deltac &\coloneqq \deltati - \idealdeltati, \label{eqn:deltac_def} \\
    \tilde{\eb}_\theta(\tiledxt | \tiledxl, \vc)
    &= \eb_\theta(\tiledxt | \tiledxl, \varnothing)
    + s\idealdeltati + s\deltac, \label{eqn:deltac_cfg}
\end{align}
where \eqref{eqn:deltac_cfg} demonstrates how $s\deltac$ shifts the generative trajectory from the optimal direction. We can explicitly represent the term $\deltac$ via the score function, across different model parameterizations.

\begin{restatable}{lemma}{lemmamisguidance}
    \label{lemma:misguidance}
    Let the forward process be defined by the generalized Gaussian marginal $p_t(\tiledxt | \tiledxzero) = \mathcal{N}(\tiledxt; \alpha_t \tiledxzero, \sigma_t^2 \mathbf{I})$. Then, the prompt misguidance $\deltac$ defined in \eqref{eqn:deltac_def} can be universally represented as:
    \begin{equation}
        \deltac = w(t)\left(\nabla_\tiledxt\log{p_t(\tiledxt |\tiledxl,\idealc)} - \nabla_\tiledxt\log{p_t(\tiledxt |  \tiledxl, \vc)} \right)
    \end{equation}
    where the weighting function $w(t)$ varies depending on the prediction target:
    \begin{equation}
        w(t) = 
        \begin{cases}
            \frac{\sigma_t^2}{\alpha_t}, & \text{for } \vx_0\text{-prediction} \\
            \sigma_t, & \text{for } \epsilonb\text{-prediction} \\
            \frac{\sigma_t}{\alpha_t}, & \text{for } \vv\text{-prediction and flow-based models}
        \end{cases}
    \end{equation}
\end{restatable}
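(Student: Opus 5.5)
The plan is to write every prediction target as an affine function of the conditional score, using Tweedie's formula for the Gaussian marginal $p_t(\tiledxt|\tiledxzero)=\mathcal{N}(\alpha_t\tiledxzero,\sigma_t^2\mathbf{I})$. The key observation is that the unconditional term $\eb_\theta(\tiledxt|\tiledxl,\varnothing)$ appears in both $\deltati$ and $\idealdeltati$. It therefore cancels in $\deltac=\deltati-\idealdeltati$, and we get
\[
\deltac=\eb_\theta(\tiledxt|\tiledxl,\vc)-\eb_\theta(\tiledxt|\tiledxl,\idealc).
\]
This reduces the lemma to showing that, for each parameterization, the optimal (well-trained) predictor has the form $a_t\tiledxt+b_t\nabla_\tiledxt\log p_t(\tiledxt|\tiledxl,\vc)$. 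Here $a_t$ is independent of the condition and $|b_t|=w(t)$. The $a_t\tiledxt$ part then cancels as well, and only the score difference remains.

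For each case I will use the conditional Tweedie identity, with conditioning on $(\tiledxl,\vc)$ throughout:
\[
\mathbb{E}[\tiledxzero|\tiledxt,\tiledxl,\vc]=\frac{\tiledxt+\sigma_t^2\nabla_\tiledxt\log p_t(\tiledxt|\tiledxl,\vc)}{\alpha_t}.
\]
Combining it with $\tiledxt=\alpha_t\tiledxzero+\sigma_t\epsilonb$ gives
\[
\mathbb{E}[\epsilonb|\tiledxt,\tiledxl,\vc]=-\sigma_t\nabla_\tiledxt\log p_t(\tiledxt|\tiledxl,\vc).
\]
The three cases then go as follows.
\begin{itemize}
\item \emph{$\epsilonb$-prediction:} the coefficient is $b_t=-\sigma_t$. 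This gives $\deltac=\sigma_t(\nabla\log p_t(\cdot|\idealc)-\nabla\log p_t(\cdot|\vc))$, exactly as stated.
\item \emph{$\vx_0$-prediction:} the coefficient is $b_t=\sigma_t^2/\alpha_t$.
\item \emph{Linear flow:} here $\alpha_t=1-t$, $\sigma_t=t$ and $\vv=\epsilonb-\vx_0$. Substituting both expectations gives the score coefficient $-t-t^2/(1-t)=-t/(1-t)=-\sigma_t/\alpha_t$.
\item \emph{VP $\vv$-prediction:} here $\vv=\alpha_t\epsilonb-\sigma_t\vx_0$, and the coefficient is $-\alpha_t\sigma_t-\sigma_t^3/\alpha_t=-\sigma_t(\alpha_t^2+\sigma_t^2)/\alpha_t=-\sigma_t/\alpha_t$, using $\alpha_t^2+\sigma_t^2=1$.
\end{itemize}
In every case the $\tiledxt/\alpha_t$ terms do not depend on the condition, so they cancel.

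The main obstacle is bookkeeping of signs and conventions rather than any deep step. For $\vx_0$-prediction the coefficient $b_t$ is positive. The literal difference is therefore $\frac{\sigma_t^2}{\alpha_t}(\nabla\log p_t(\cdot|\vc)-\nabla\log p_t(\cdot|\idealc))$, which is the stated expression with its sign flipped. This reflects the fact that an $\vx_0$-predictor points toward the data, while $\epsilonb$- and $\vv$-predictors point away from it. I will handle this by stating the result for the magnitude and direction of misguidance, so the identity holds up to the orientation convention of the prediction target. Equivalently, for the $\vx_0$ case one can define $\Delta$ with the sign that makes guidance point along the score.

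A second point to state explicitly is the modelling assumption that $\eb_\theta$ equals the Bayes-optimal (MMSE) predictor for every condition, including $\varnothing$, so that Tweedie applies. For the VP $\vv$-case the simplification also uses $\alpha_t^2+\sigma_t^2=1$; for a general schedule the $\vv$-coefficient is $\sigma_t(\alpha_t^2+\sigma_t^2)/\alpha_t$.
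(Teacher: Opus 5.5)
Your proposal is correct and takes essentially the same route as the paper's proof. Each prediction target is written, via Tweedie, as a condition-independent multiple of $\tiledxt$ plus a multiple of the conditional score; the null-prompt and $\tiledxt$ terms then cancel in $\deltac=\eb_\theta(\cdot\mid\vc)-\eb_\theta(\cdot\mid\idealc)$, and $w(t)$ is read off from the remaining coefficient. Your two caveats also hold, and the paper glosses over both. For $\vx_0$-prediction the literal difference has the opposite sign to the stated formula; the paper asserts the stated sign anyway, which is harmless only because Proposition~\ref{prop:mutual} uses just $\|\deltac\|^2$ and $w(t)^2$. The paper's $\vv$-prediction computation also silently assumes $\alpha_t^2+\sigma_t^2=1$.
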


Empirically, two types of semantic deviation are observed in super-resolution settings with latent tiling: (i) errors of commission and (ii) errors of omission. Errors of commission are when $\vcglobal$ contains semantic tokens describing other regions of the image (\textit{e.g.}, ``sky'' for a tile showing pavement). The score gradients maximizing the likelihood of these extraneous concepts push the reverse diffusion trajectory toward completely different semantics. Errors of omission are when the global prompt being coarse omits critical local high-frequency details. Because $\tiledxl$ alone is highly ill-posed, the absence of specific text prompts can leave the conditional score direction ambiguous, failing to provide accurate guidance to the correct sharp posterior.

\clearpage
\subsection{Information-Theoretic Bound on Prompt Misguidance}
\label{sec:methodology-mi-connection}

To systematically mitigate the prompt misguidance $\deltac$, we must establish its relationship with the information-theoretic properties of the conditioning text. We evaluate the discrepancy between using an ideal local prompt $\idealc$ and any given text condition $\vc$ by defining the gap in mutual information $\Delta I$:
\begin{equation}
    \label{eqn:deltaIdef}
    \Delta I \coloneqq I(\xH^{(i)} ; \idealc \mid \xL^{(i)}) - I(\xH^{(i)} ; \vc \mid \xL^{(i)}).
\end{equation}
By definition, the ideal local prompt $\idealc$ completely captures all localized visual semantics. Consequently, conditioned on $\tiledxl$ and $\idealc$, the high-resolution tile $\tiledxh$ is conditionally independent of any other generic text condition $\vc$:
\begin{equation}
    \label{eqn:cond_indep}
    p(\tiledxh \mid \tiledxl, \idealc, \vc) \approx p(\tiledxh \mid \tiledxl, \idealc).
\end{equation}
According to the unified framework of score-based generative models~\cite{song2020score}, any continuous-time Gaussian forward process $p_t(\tiledxt | \tiledxzero)$ including the marginal distributions of deterministic flow-based models can be characterized by an equivalent SDE of the form $d\tiledxt = f(t)\tiledxt dt + g(t)dw_t$. By leveraging the path measures of these associated SDEs, 
we can directly link the mutual information gap $\Delta I$ to the KL divergence and bound it over the entire trajectory. 

\begin{restatable}{proposition}{propmutual}
\label{prop:mutual}
    For any text condition $\vc$, the mutual information gap $\Delta I$
    is equivalent to the KL divergence between the ideal and $\vc$-guided posteriors:
    \begin{align}
        \label{eqn:DeltaIKL}
        \Delta I = \mathbb{E}_{\idealc,\vc|\tiledxl} \left[D_{KL}( p(\xH^{(i)} \mid \xL^{(i)}, \idealc) \parallel p(\xH^{(i)} \mid \xL^{(i)}, \vc) )\right].
    \end{align}
    Moreover, it lower-bounds the accumulated prompt misguidance:
    \begin{equation}
        \Delta I \leq \frac{1}{2} \int_0^T \lambda(t) \mathbb{E}_{\idealc, \vc|\tiledxl}\mathbb{E}_{\tiledxt\sim p_t(\tiledxt|\tiledxl,\idealc)} \left[ \| \deltac(t) \|^2 \right] dt,
    \end{equation}
    with $\lambda(t) = \frac{g(t)^2}{w(t)^2}$, where $g(t)$ is the diffusion coefficient of the associated forward SDE, and $w(t)$ is the parameterized weighting function defined in Lemma~\ref{lemma:misguidance}.
\end{restatable}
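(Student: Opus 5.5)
Both parts rest on two ingredients: the conditional independence assumption \eqref{eqn:cond_indep}, and the Girsanov comparison of path measures for the reverse-time SDEs associated with the forward process $d\tiledxt = f(t)\tiledxt\,dt + g(t)\,dw_t$.

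\textbf{The identity \eqref{eqn:DeltaIKL}.} Throughout, condition on $\tiledxl$ and treat $(\idealc,\vc)$ as jointly distributed given $\tiledxl$. By the chain rule for conditional mutual information,
\begin{equation*}
I(\tiledxh;\idealc,\vc\mid\tiledxl) = I(\tiledxh;\vc\mid\tiledxl) + I(\tiledxh;\idealc\mid\tiledxl,\vc).
\end{equation*}
Expanding in the other order gives
\begin{equation*}
I(\tiledxh;\idealc,\vc\mid\tiledxl) = I(\tiledxh;\idealc\mid\tiledxl) + I(\tiledxh;\vc\mid\tiledxl,\idealc),
\end{equation*}
and the last term vanishes by \eqref{eqn:cond_indep}. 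Subtracting the two expansions yields $\Delta I = I(\tiledxh;\idealc\mid\tiledxl,\vc)$.

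Next I write this conditional mutual information as an expected log-ratio:
\begin{equation*}
\Delta I = \mathbb{E}\left[\log \frac{p(\tiledxh\mid\tiledxl,\idealc,\vc)}{p(\tiledxh\mid\tiledxl,\vc)}\right].
\end{equation*}
Using \eqref{eqn:cond_indep} once more, I replace the numerator by $p(\tiledxh\mid\tiledxl,\idealc)$. I then take the inner expectation over $\tiledxh\sim p(\cdot\mid\tiledxl,\idealc,\vc)=p(\cdot\mid\tiledxl,\idealc)$. This inner expectation is exactly $D_{KL}(p(\cdot\mid\tiledxl,\idealc)\,\|\,p(\cdot\mid\tiledxl,\vc))$, and averaging over $(\idealc,\vc)\mid\tiledxl$ gives \eqref{eqn:DeltaIKL}.

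\textbf{The bound.} Fix a pair $(\idealc,\vc)$ and consider the two reverse-time SDEs started from the same terminal law at time $T$ (approximately $p_T$, which for a sufficiently noised process does not depend on the condition). Their drifts are
\begin{equation*}
f(t)\vx - g(t)^2\nabla\log p_t(\vx\mid\tiledxl,\idealc) \quad\text{and}\quad f(t)\vx - g(t)^2\nabla\log p_t(\vx\mid\tiledxl,\vc).
\end{equation*}
Both share the diffusion coefficient $g(t)$. The time-$0$ marginals of these path measures are the two posteriors, so the data-processing inequality gives
\begin{equation*}
D_{KL}(p_0^{\idealc}\,\|\,p_0^{\vc}) \le D_{KL}(\mathbb{P}^{\idealc}\,\|\,\mathbb{P}^{\vc}).
\end{equation*}
Girsanov's theorem evaluates the right-hand side as
\begin{equation*}
\tfrac12\int_0^T \frac{1}{g(t)^2}\,\mathbb{E}_{\tiledxt\sim p_t(\cdot\mid\tiledxl,\idealc)}\bigl\|g(t)^2(\text{score difference})\bigr\|^2\,dt,
\end{equation*}
which simplifies to $\tfrac12\int_0^T g(t)^2\,\mathbb{E}\|\nabla\log p_t(\cdot\mid\idealc)-\nabla\log p_t(\cdot\mid\vc)\|^2\,dt$. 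This is the standard argument of Song et al.\ on likelihood weighting. Lemma~\ref{lemma:misguidance} identifies the score difference with $\deltac(t)/w(t)$, so the integrand becomes $\lambda(t)\,\mathbb{E}\|\deltac(t)\|^2$ with $\lambda(t)=g(t)^2/w(t)^2$. Taking $\mathbb{E}_{\idealc,\vc\mid\tiledxl}$ and combining with the identity above completes the proof.

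\textbf{Main obstacle.} The delicate step is the Girsanov comparison. It needs regularity (Novikov-type integrability of the score difference) and matching terminal laws. If the terminal laws differ, an extra term $D_{KL}(p_T^{\idealc}\|p_T^{\vc})$ appears; I would argue that it vanishes or is negligible as $T$ grows, or fold it into the approximation.

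For deterministic flow models, I would invoke the equivalent-SDE remark stated before the proposition: the marginals coincide with those of an SDE having the same $g(t)$, so the argument applies to those path measures. In addition, \eqref{eqn:cond_indep} holds only approximately, so the identity is exact only up to that approximation, and I would state it that way.
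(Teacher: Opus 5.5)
Your proposal follows the paper's proof essentially step for step: the two chain-rule expansions with \eqref{eqn:cond_indep} removing $I(\tiledxh;\vc\mid\tiledxl,\idealc)$ and turning $I(\tiledxh;\idealc\mid\tiledxl,\vc)$ into the expected posterior KL, then, for fixed $(\idealc,\vc)$, data processing from the reverse-time path measures to their time-$0$ marginals, Girsanov for the two reverse SDEs sharing $g(t)$, and Lemma~\ref{lemma:misguidance} to convert the score gap into $\deltac/w(t)$. The terminal-law caveat you raise is genuine and is passed over silently in the paper, but it cannot be argued to vanish at finite $T$ with $\alpha_T>0$, because Gaussian smoothing is injective and so the terminal laws differ whenever the posteriors do. Writing $p_t^{\idealc}$, $p_t^{\vc}$ for $p_t(\cdot\mid\tiledxl,\idealc)$, $p_t(\cdot\mid\tiledxl,\vc)$, the common Fokker--Planck dynamics give the exact identity $D_{KL}(p_0^{\idealc}\|p_0^{\vc}) = D_{KL}(p_T^{\idealc}\|p_T^{\vc}) + \frac{1}{2}\int_0^T g(t)^2\,\mathbb{E}_{p_t^{\idealc}}\|\nabla\log p_t^{\idealc}-\nabla\log p_t^{\vc}\|^2\,dt$. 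So the stated bound holds only when the terminal KL is zero (e.g.\ zero terminal SNR, as at the linear-flow endpoint $\alpha_1=0$) or asymptotically as $T\to\infty$, and then as an equality; ``negligible as $T$ grows'' is therefore the correct way to close that step.
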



\subsection{Minimizing Misguidance via Tiled Prompts}
\label{sec:methodology-tiled-prompts}

Proposition~\ref{prop:mutual} reveals a critical bottleneck in standard tiled SR: any loss of mutual information ($\Delta I > 0$) caused by suboptimal $\vc$ artificially raises the lower bound, mathematically forcing the prompt misguidance integral to be strictly greater than zero. To overcome this limitation, we design our proposed Tiled Prompts framework to strictly \textit{reduce} the mutual information gap $\Delta I$ by introducing local prompts $\vclocal$ of much higher relevance compared to $\vcglobal$.

When reconstructing the $i$-th tile with $\vcglobal$, the generative SR model's attention is forced to distribute probability mass across the limited concepts available in $\vcglobal$, or default to an ambiguous prior if relevant local details are missing.
Using $\vclocal$ instead resolves such errors of commission and omission, effectively reducing $\Delta I$ to minimize discrepancy from ideal guidance.

\begin{restatable}{proposition}{proptiled}
\label{prop:tiled}
    Let the global prompt $\vcglobal$ be modeled as a semantic mixture of its present concepts and omitted conditions.
    Then, the mutual information gap observed when using the specific tiled prompt $\vc_\text{local}^{(i)}\sim p(\vc|\tiledxl)$ is less than or equal to the mutual information gap observed when using the global prompt:
    \begin{equation}
        \Delta I_\ell \leq \Delta I_{\mathrm{g}},
    \end{equation}
    where $\Delta I_\ell$ and $\Delta I_{\mathrm{g}}$ denote the mutual information gaps when conditioning on $\vclocal$ and $\vcglobal$, respectively.
\end{restatable}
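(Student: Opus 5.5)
The plan is to argue entirely at the level of the mutual-information terms in \eqref{eqn:deltaIdef}. The first term $I(\tiledxh;\idealc\mid\tiledxl)$ is common to both gaps, so $\Delta I_\ell \leq \Delta I_{\mathrm g}$ is equivalent to
\begin{equation*}
I(\tiledxh;\vcglobal\mid\tiledxl) \leq I(\tiledxh;\vclocal\mid\tiledxl).
\end{equation*}
The modeling assumption in the statement ("semantic mixture of present concepts and omitted conditions") has to be made concrete so that this becomes a data-processing inequality. I would formalize it as follows. Write $\vcglobal = (\vc_{\text{rel}}, \vc_{\text{irr}})$, where $\vc_{\text{rel}}$ are the concepts of the global caption that are visible in tile $i$, and $\vc_{\text{irr}}$ are the concepts describing other tiles.

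The first step is to model the two kinds of error separately.

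\emph{Omission.} The tile-relevant part of the global prompt is a coarsening of the local prompt. Since $\vclocal\sim p(\vc\mid\tiledxl)$ is extracted directly from the tile, we assume $\vc_{\text{rel}}$ is a (possibly randomized) function of $\vclocal$ and $\tiledxl$ alone. This gives the Markov chain $\tiledxh \to (\tiledxl,\vclocal) \to \vc_{\text{rel}}$.

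\emph{Commission.} The extraneous part carries no information about tile $i$ beyond what is already in $\tiledxl$. Formally, $\vc_{\text{irr}}$ depends only on the other tiles $\{\vx_L^{(j)}\}_{j\neq i}$, and we require $\vc_{\text{irr}}\perp \tiledxh \mid (\tiledxl,\vc_{\text{rel}})$. This is the same type of conditional-independence idealization as \eqref{eqn:cond_indep}.

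The second step is the chain rule:
\begin{equation*}
I(\tiledxh;\vcglobal\mid\tiledxl) = I(\tiledxh;\vc_{\text{rel}}\mid\tiledxl) + I(\tiledxh;\vc_{\text{irr}}\mid\tiledxl,\vc_{\text{rel}}).
\end{equation*}
The commission assumption makes the second term zero. The omission Markov chain and the conditional data-processing inequality then bound the first term by $I(\tiledxh;\vclocal\mid\tiledxl)$, which gives the claim. We could also note, via Proposition~\ref{prop:mutual}, that the resulting inequality is equivalent to the expected KL to the ideal posterior being no larger with $\vclocal$. This yields a smaller lower bound on the accumulated misguidance, but it is not needed for the proof itself.

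The main obstacle is justifying the omission Markov chain. The global caption is generated from the full $\xL$, so it could in principle contain information about tile $i$ that is not in $\vclocal$ (for instance, context from neighbouring tiles that disambiguates it). Honestly, this step needs an explicit assumption: either that $\vclocal$ is a sufficient description of tile $i$, in the sense $\vc_{\text{rel}}\perp\tiledxh\mid(\tiledxl,\vclocal)$, or that the local prompt is produced with global context as well. I would state it as part of the "semantic mixture" modeling, analogous to \eqref{eqn:cond_indep}. Once it is in place, the rest is standard information-theoretic bookkeeping. Equality holds exactly when the global prompt already contains all tile-relevant information, which accounts for the non-strict inequality.
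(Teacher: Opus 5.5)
Your argument is correct under the two conditional-independence assumptions you state, but it takes a genuinely different route from the paper's. The paper reads ``semantic mixture'' probabilistically. It introduces a latent attention variable $Z$ over the concepts $\vc_\text{local}^{(j)}$, $j\in S_g$, that $\vcglobal$ actually covers, plus an omitted fallback $\vc_\emptyset$. It then models $p(\tiledxh\mid\tiledxl,\vcglobal)=\sum_z w_{i,z}\,p(\tiledxh\mid\tiledxl,\vc_z)$ and uses concavity of entropy to get $H(\tiledxh\mid\tiledxl,\vcglobal)\ge\sum_z w_{i,z}\,H(\tiledxh\mid\tiledxl,\vc_z)$. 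Finally it \emph{assumes} that $\vclocal$ has the smallest conditional entropy among all $\vc_\text{local}^{(j)}$ and $\vc_\emptyset$. This gives a pointwise entropy ordering, which is averaged and turned into the MI ordering. You instead split the prompt deterministically into $(\vc_{\text{rel}},\vc_{\text{irr}})$ and use the chain rule plus conditional data processing. Your sufficiency assumption $\vc_{\text{rel}}\perp\tiledxh\mid(\tiledxl,\vclocal)$ does the job of the paper's entropy-ranking assumption.

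Your route stays inside one joint law, so every identity is exact and you get a clean equality condition. The price is that, in Shannon terms, extra conditioning can never lower mutual information. So errors of commission can at best be \emph{neutral} in your model (you assume their chain-rule term away), and all of the gain comes from omission. The paper's mixture, in which the network spreads mass over off-target concepts, is what lets commission actively \emph{raise} entropy, matching the narrative. The cost is that this conditional is a behavioural model of the SR network, not something derived from a single joint law of all the variables.

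Relatedly, suppose you read everything literally, with $\vclocal=Y_{\text{VLM}}(\tiledxl;\eta_i)$ and independent sampling noise $\eta_i$. Then $I(\tiledxh;\vclocal\mid\tiledxl)=0$, hence $\Delta I_{\mathrm g}=\Delta I_\ell-I(\tiledxh;\vcglobal\mid\tiledxl)\le\Delta I_\ell$. Your assumptions merely force equality, and the neighbouring-tile information you flag as the main obstacle is exactly what would reverse the inequality. So both your argument and the paper's have content only when the conditionals are read as the SR model's prompt-conditioned posteriors.
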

By extracting and utilizing dense, descriptive prompts $\vc_\text{local}^{(i)}$ specific to each $i$-th tile, our Tiled Prompts framework explicitly minimizes $\Delta I$ by preventing both errors of commission and omission. According to Proposition~\ref{prop:mutual}, the reduction $\Delta I_\ell \leq \Delta I_{\mathrm{g}}$ is essential in that it drops the theoretical floor of the prompt misguidance vector $\| \deltac \|^2$, allowing the generative trajectory to successfully converge to accurate high-frequency details.

\subsection{Generation of Tiled Prompts}

\begin{wrapfigure}{r}{0.55\textwidth}
\vspace{-1.6cm}
\begin{minipage}{0.55\textwidth}
\begin{algorithm}[H]
    \caption{Inference with Tiled Prompts}
    \label{alg:tiledprompts}
    \begin{algorithmic}[1]
        \Require {LR Input $\xL$, SR Model $f_\theta$, VLM $Y_\text{VLM}$, crop size $(t, k_1, k_2)$, scale $r$,  timesteps $\{\tau_m\}_{m=1}^T$, scheduler $\gS$.} 
        \State {Compute tiling coordinates $\{R_i\}_{i=1}^N$}
        \For {$i = 1 \rightarrow N$}
            \State {$\xL^{(i)} \leftarrow \text{Crop}(\xL, R_i)$}
            \State {$\textcolor{tp}{\vc_\text{local}^{(i)}} \sim Y_\text{VLM}(\xL^{(i)};\eta_i)$}
        \EndFor
        \State {Compute Gaussian window $w_i$ for region $R_i$}
        \State {Initialize $\vz^{(T)} \leftarrow \text{InitNoise}()$}
        \For {$m = T \rightarrow 1$}
            \For {$i = 1 \rightarrow N$}
                \State {$\vz^{(i,m)} \leftarrow \text{Crop}(\vz^{(m)}, R_i)$}
                \State {$\hat{\ve}^{(i)} \leftarrow f_\theta(\vz^{(i,m)}, \xL^{(i)}, \textcolor{tp}{\vc_\text{local}^{(i)}}, \tau_m)$}
                \State {$\mathbf{E}(R_i) \leftarrow \mathbf{E}(R_i) + w_i \cdot \hat{\ve}^{(i)}$}
                \State {$\mathbf{W}(R_i) \leftarrow \mathbf{W}(R_i) + w_i$}
            \EndFor

            \State {$\hat{\ve} \leftarrow \mathbf{E} \oslash \mathbf{W}$}
            \State {$\vz^{(m-1)} \leftarrow \gS( \vz^{(m)}, \hat{\ve}, \tau_m )$}
        \EndFor
        \State {$\xH \leftarrow \text{Decode}(\vz^{(0)})$}
        \State {{\bfseries return} $\xH$}
    \end{algorithmic}
\end{algorithm}
\end{minipage}
\vspace{-0.8cm}
\end{wrapfigure}

\subsubsection{Image Super-Resolution.}

To address prompt misguidance, our framework replaces the single global prompt with \textit{tile-specific} textual conditions. Specifically, instead of broadcasting a single caption $\vc_{\text{global}}$ to all tiles as in Fig.~\ref{fig:img-pipeline}(a), we extract local prompts using a VLM conditioned on each local tile as in Fig.~\ref{fig:img-pipeline}(b). This allows for dense, localized guidance that is better aligned with the visual evidence within each tile.

For each LR tile $\tiledxl$, we generate a corresponding local prompt by leveraging a VLM as a prompt extractor,
\begin{equation}
    \vclocal := Y_{\text{VLM}}(\tiledxl;\eta_i),
    \label{eq:clocal-image}
\end{equation}
where $\eta_i$ is sampling noise.
Collectively, the set of local prompts $\{\vclocal\}_{i=1}^N$ effectively
mitigates misguidance by providing localized, tile-specific semantics that are typically absent or misleading in a single global caption. Tiled prompts reduce the misguidance $\deltac$ by constructing $\vc=\vc_{\text{local}}^{(i)}$ to be semantically consistent with the tile, thereby improving fidelity and reducing artifacts at high resolutions.

\begin{figure}
    \centering
    \includegraphics[width=\linewidth]{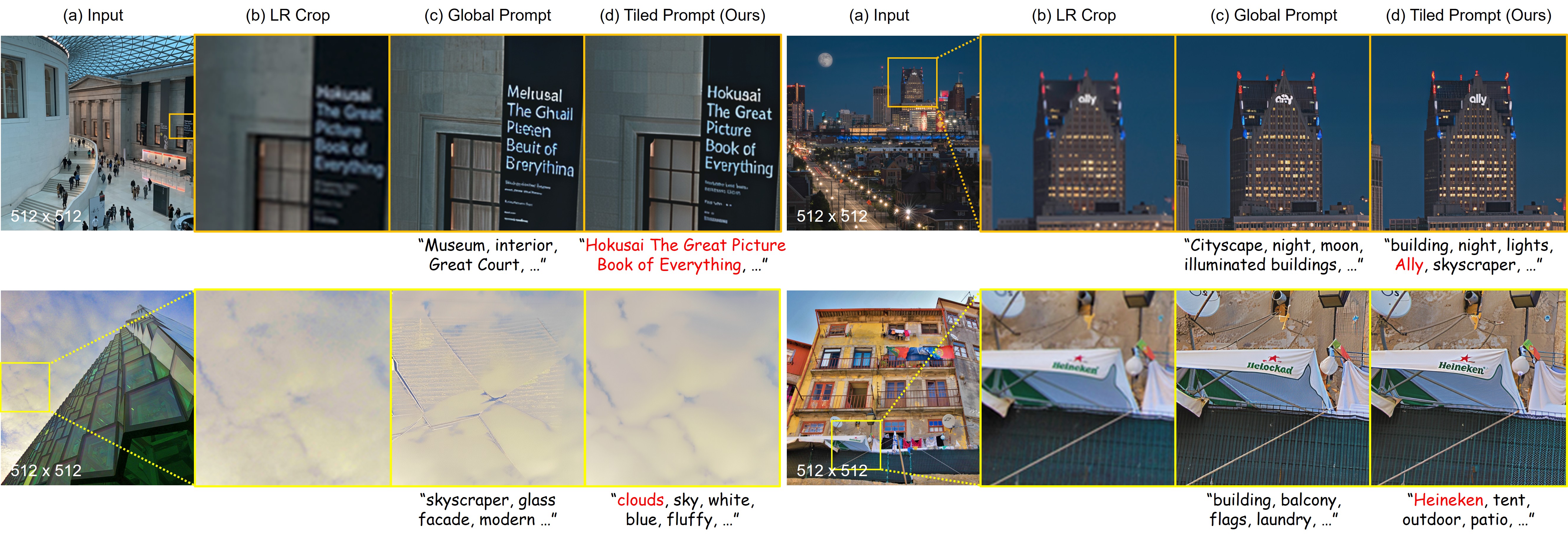}
    \caption{Qualitative results for image super-resolution. \textbf{(a,b) Input}: The low-resolution input and a cropped tile to be upsampled. \textbf{(c) SR with Global Prompt}: The baseline result using only a single, coarse global prompt. As the text prompt does not provide sufficient guidance, super-resolution results are inaccurate. \textbf{(d) SR with Tiled Prompt (Ours)}: Our method uses dense, localized textual guidance to generate accurate and semantically plausible high-frequency details consistent with the context.}
    \label{fig:qualitative-image}
\end{figure}

\subsubsection{Video Super-Resolution.}

Prompt misguidance is exacerbated in Video Super Resolution (VSR) pipelines, where an entire LR sequence $\xL\in\mathbb{R}^{T\times H\times W\times C}$ is conditioned on a single static global prompt $\vc_{\text{global}}$.
This leads to \textit{dual} misguidance during latent tiling: \textit{spatial misguidance}, where $\vc_{\text{global}}$ omits spatial details as for the image case, and \textit{temporal misguidance}, where $\vc_{\text{global}}$ lacks the temporal granularity needed to describe local dynamics (\textit{e.g.}, motion, evolution).

Directly using \eqref{eq:clocal-image} would extract a prompt from each local spatio-temporal block alone.
However, in practice, VLMs struggle to infer accurate motion level descriptions from a small cropped window alone for videos.
In accordance with the increased misguidance, we provide the prompt-extraction VLM with additional global context from the full LR sequence $\xL$,
\begin{equation}
    \vclocal := Y_{\text{VLM}}(\tiledxl,\xL;\eta_i),
    \label{eq:video-prompt-extraction}
\end{equation}
where $\eta_i$ denotes sampling noise.
Although the global input $\xL$ is now shared across tiles, the output $\vclocal$ remains tile-specific to the local evidence in $\xL^{(i)}$ through explicit instructions provided by system and user prompts. These instructions assist the VLM to account for the additional global context and fully leverage the reasoning capability of the VLM, enabling $\vclocal$ to encode localized dynamics  (\textit{e.g.}, ``spinning wheels'' instead of only ``car'') and mitigate both spatial and temporal misguidance.

\begin{table}
    \centering
    \caption{Quantitative comparison on image quality metrics and image-text alignment metrics. Best results are in \textbf{bold}, second-best results are \underline{underlined}.}
    \resizebox{0.8\textwidth}{!}{
    \begin{tabular}{ll|cccc|ccc}
    \toprule
    \multicolumn{2}{l}{} & \multicolumn{4}{c}{\textbf{Image Quality}} & \multicolumn{3}{c}{\textbf{Image-Text Alignment}} \\
    Dataset & Prompt Type & NIQE$\downarrow$ & MUSIQ$\uparrow$ & MANIQA$\uparrow$ & CLIPIQA$\uparrow$ & CLIP Score$\uparrow$ & ImageReward$\uparrow$ & HPSv2$\uparrow$ \\
    
    \midrule
    \multirow{4}{*}{LSDIR1K}
    & \multicolumn{1}{c|}{\xmark}
    & 3.4537 & 62.2188 & 0.6126 & 0.6346 & \xmark & \xmark & \xmark \\
    & Global (Baseline)
    & 2.9427 & 63.8677 & \underline{0.6373} & 0.6886 & 25.3348 & -1.5901 & 0.1589 \\
    & Global $+$ Local
    & \underline{2.9418} & \textbf{64.0749} & \textbf{0.6379} & \textbf{0.6932} & \underline{25.7925} & \underline{-1.4775} & \underline{0.1719} \\
    & \cg Local
    & \cg \textbf{2.9040} & \cg \underline{63.9731} & \cg 0.6350 & \cg \underline{0.6917} & \cg \textbf{27.2274} & \cg \textbf{-0.6771} & \cg \textbf{0.2011} \\

    \midrule
    \multirow{4}{*}{URBAN100}
    & \multicolumn{1}{c|}{\xmark}
    & 4.2366 & \textbf{56.1273} & 0.6343 & 0.6297 & \xmark & \xmark & \xmark \\
    & Global (Baseline)
    & 3.6156 & 53.1372 & \underline{0.6668} & 0.6715 & 25.9807 & -1.1688 & 0.1780 \\
    & Global $+$ Local
    & \underline{3.5724} & 53.8299 & \textbf{0.6681} & \textbf{0.6784} & \underline{26.4686} & \underline{-1.0504} & \underline{0.1895} \\
    & \cg Local
    & \cg \textbf{3.5001} & \cg \underline{54.9203} & \cg 0.6618 & \cg \underline{0.6780} & \cg \textbf{27.4044} & \cg \textbf{-0.5193} & \cg \textbf{0.2092} \\

    \midrule
    \multirow{4}{*}{OST300}
    & \multicolumn{1}{c|}{\xmark}
    & 3.3776 & 62.7091 & 0.6164 & 0.6183 & \xmark & \xmark & \xmark \\
    & Global (Baseline)
    & 2.9547 & 66.3813 & \underline{0.6536} & 0.6799 & 25.1512 & -1.6638 & 0.1418 \\
    & Global $+$ Local
    & \underline{2.9455} & \textbf{66.8685} & \textbf{0.6555} & \underline{0.6875} & \underline{25.4540} & \underline{-1.6344} & \underline{0.1522} \\
    & \cg Local
    & \cg \textbf{2.9007} & \cg \underline{66.7233} & \cg 0.6518 & \cg \textbf{0.6943} & \cg \textbf{26.8594} & \cg \textbf{-0.7881} & \cg \textbf{0.1838} \\
    
    \bottomrule
    \end{tabular}
    }
    \label{tab:quant-image}
\end{table}

\section{Experiments}
\label{sec:experiments}

\subsection{Experimental Settings}

\subsubsection{Image Experiments.}
We use DiT4SR~\cite{duan2025dit4sr} as our base SR model, a recent model that uses a diffusion transformer as its generative backbone. Global and local (tiled) prompts are extracted with Qwen2.5-VL-7B-Instruct~\cite{qwen2.5-VL}.

We evaluate our framework on real-world image datasets Urban100~\cite{huang2015single}, OST300~\cite{wang2018recovering}, and the first 1K images from LSDIR~\cite{li2023lsdir} which we term LSDIR1K. We do not use low-resolution datasets such as DrealSR~\cite{wei2020component}, RealSR~\cite{cai2019toward}, RealLR200~\cite{wu2024seesr}, or RealLQ250~\cite{ai2024dreamclear}, as we find their low resolutions inappropriate for evaluating latent tiling performance. Each image is resized and cropped to a resolution of 512$\times$512, producing a resolution of 2048$\times$2048 after 4$\times$ magnification. For tiled inference, each LR latent is divided into tiles of size 64$\times$64 with overlap of 16, resulting in 25 tiles total.
Such settings require us to use ground-truth images of at least 2048$\times$2048 resolution for evaluating reference-based metrics, which is unavailable for most datasets. Thus, for reference-based evaluation we identify all images larger than 2048$\times$2048 from the entire LSDIR dataset and refer to the resulting 153 images as LSDIR2048.
Reference-based evaluation is performed for the high-resolution LSDIR2048 and SEPE8K~\cite{al2023sepe} datasets, of which ground-truth images are (strictly) downsampled to 2048$\times$2048.

\subsubsection{Video Experiments.}
For video experiments, we use the pretrained video SR model STAR~\cite{xie2025star} based on the I2VGen-XL backbone~\cite{2023i2vgenxl}, which is a widely used and publicly available baseline trained with text-conditioning.
Global and local (tiled) prompts are extracted using Qwen3-VL-8B-Instruct~\cite{qwen3-VL}.

We evaluate our framework on the real-world video datasets VideoLQ~\cite{chan2021investigating}, RealVSR~\cite{yang2021real}, and MVSR4x~\cite{wang2022benchmark}.
We do not apply spatial resizing or cropping for videos, and directly process entire video sequences as inputs.
For tiled inference, each LR latent is divided into spatio-temporal blocks of size $32 \times 720 \times 1280$.

\subsubsection{IQA Metrics.}
We assess the fidelity perceptual quality of images using no-reference metrics NIQE~\cite{zhang2015feature}, MUSIQ~\cite{ke2021musiq}, MANIQA~\cite{yang2022maniqa}, CLIPIQA~\cite{wang2023exploring}, and reference-based metrics PSNR, SSIM~\cite{wang2004image}, LPIPS~\cite{zhang2018unreasonable}, DISTS~\cite{ding2020image}, FID~\cite{heusel2017gans}.
For the VSR task, we evaluate frame-wise image quality on the metrics NIQE~\cite{zhang2015feature}, MUSIQ~\cite{ke2021musiq}, MANIQA~\cite{yang2022maniqa}, CLIPIQA~\cite{wang2023exploring}, HYPERIQA~\cite{su2020blindly}, and use video-specific metrics FasterVQA~\cite{wu2022neighbourhood}, FAST-VQA~\cite{wu2022fastvqa}, DOVER~\cite{wu2023exploring} to assess temporal consistency and authentic video quality.

\begin{figure}[!t]
    \centering
    \includegraphics[width=\linewidth]{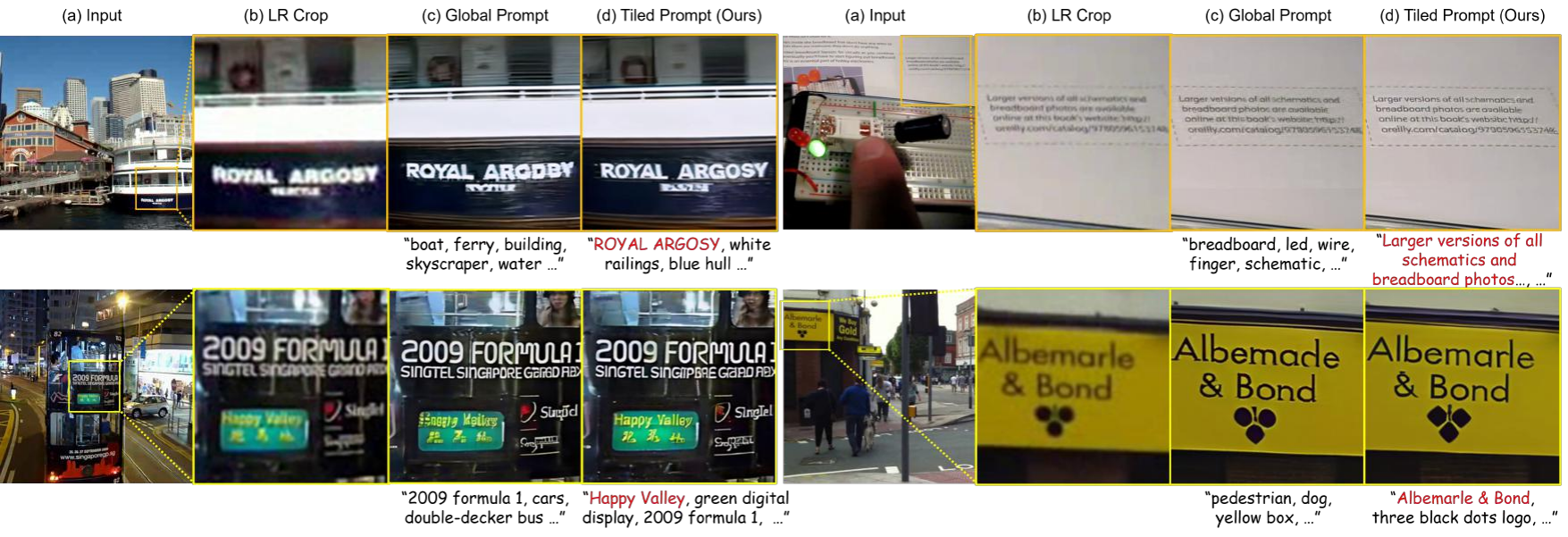}
    \caption{Qualitative results for video super-resolution. \textbf{(a,b) Input}: A low-resolution input frame and its cropped tile before SR. \textbf{(c) SR with Global Prompt}: Using only a coarse global prompt does not provide sufficient guidance, causing inaccurate SR results. \textbf{(d) SR with Tiled Prompt (Ours)}: Our method of using dense, localized textual guidance proves effectively reconstructs details in videos.}
    \label{fig:qualitative-video}
\end{figure}

\begin{table}[t]
    \centering
    \caption{Quantitative comparison on frame-wise quality, video quality, and video-text alignment metrics. Best results are in \textbf{bold}, second-best results are \underline{underlined}.}
    \resizebox{1.0\textwidth}{!}{
    \begin{tabular}{ll|ccccc |ccc |cc}
    \toprule
    \multicolumn{2}{l}{} & \multicolumn{5}{c}{\textbf{Frame-wise Quality}} & \multicolumn{3}{c}{\textbf{Video Quality}} & \multicolumn{2}{c}{\textbf{Video-Text Alignment}} \\
    & & \multirow{2}{*}{NIQE$\downarrow$} & \multirow{2}{*}{MUSIQ$\uparrow$} & \multirow{2}{*}{MANIQA$\uparrow$} & CLIP & HYPER & Faster & FAST & \multirow{2}{*}{DOVER$\uparrow$} & Language & VQAScore \\
    Dataset & Prompt Type & & & & -IQA$\uparrow$ & -IQA$\uparrow$ & VQA$\uparrow$ & -VQA$\uparrow$ & & Bind$\uparrow$ & (LLAVA)$\uparrow$ \\
    
    \midrule
    \multirow{4}{*}{VideoLQ}
    & \multicolumn{1}{c|}{\xmark}
    & 6.0567 & 36.5171 & 0.3860 & 0.2323 & 0.3341 & 0.6262 & 0.6368 & 48.4896 & \xmark & \xmark \\
    & Global (Baseline)
    & 4.8016 & 43.9160 & 0.4704 & 0.2648 & 0.3822 & 0.7354 & 0.7507 & 53.5704  & 0.1799 & 0.3987 \\
    & Global $+$ Local
    & \underline{4.7530} & \underline{44.4024} & \underline{0.4749} & \underline{0.2655} & \underline{0.3847} & \underline{0.7392} & \underline{0.7535} & \underline{53.9664} & \underline{0.1836} & \underline{0.4441} \\
    & \cg Local
    & \cg \textbf{4.5913} & \cg \textbf{45.7662} & \cg \textbf{0.4845} & \cg \textbf{0.2713} & \cg \textbf{0.3973} & \cg \textbf{0.7488} & \cg \textbf{0.7643} & \cg \textbf{54.4158} & \cg \textbf{0.1904} & \cg \textbf{0.5542} \\

    \midrule
    \multirow{4}{*}{RealVSR}
    & \multicolumn{1}{c|}{\xmark}
    & 4.7262 & 67.7858 & 0.6533 & 0.2792 & 0.5759 & \underline{0.7598} & \textbf{0.7383} & \underline{56.6598} & \xmark & \xmark \\
    & Global (Baseline)
    & \underline{4.0614} & \underline{69.9098} & 0.6738 & 0.3073 & 0.5790 & 0.7576 & 0.7268 & 55.1562 & \underline{0.2010} & 0.5673 \\
    & Global $+$ Local
    & \textbf{4.0495} & 69.8712 & \underline{0.6747} & \underline{0.3076} & \underline{0.5800} & \textbf{0.7604} & \underline{0.7277} & 55.3080 & 0.2008 & \underline{0.5860} \\
    & \cg Local
    & \cg 4.1152 & \cg \textbf{70.6071} & \cg \textbf{0.6817} & \cg \textbf{0.3213} & \cg \textbf{0.5865} & \cg 0.7560 & \cg 0.7244 & \cg \textbf{56.7844} & \cg \textbf{0.2043} & \cg \textbf{0.7002} \\

    \midrule
    \multirow{4}{*}{MVSR}
    & \multicolumn{1}{c|}{\xmark}
    & 6.0943 & 63.3108 & 0.5452 & 0.3396 & 0.5090 & 0.7345 & 0.7370 & \underline{57.6125} & \xmark & \xmark \\
    & Global (Baseline)
    & 4.9985 & \underline{67.8004} & 0.5836 & \textbf{0.3663} & 0.5532 & 0.7402 & 0.7336 & 53.1383 & 0.2127 & 0.5568 \\
    & Global $+$ Local
    & \underline{4.9882} & 67.6542 & \underline{0.5854} & 0.3599 & \underline{0.5550} & \underline{0.7877} & \underline{0.7744} & 57.2033 & \underline{0.2149} & \underline{0.6005} \\
    & \cg Local
    & \cg \textbf{4.8322} & \cg \textbf{68.4642} & \cg \textbf{0.5905} & \cg \underline{0.3612} & \cg \textbf{0.5598} & \cg \textbf{0.7925} & \cg \textbf{0.7808} & \cg \textbf{58.3464} & \cg \textbf{0.2168} & \cg \textbf{0.7456} \\
    
    \bottomrule
    \end{tabular}
    }
    \label{tab:quant-video}
\end{table}

\subsection{Comparison Results}

\subsubsection{Image/Video Quality Evaluation.}

We evaluate the performance of our framework regarding image restoration quality in Tab.~\ref{tab:quant-image} and video restoration quality in Tab.~\ref{tab:quant-video}.
Specifically, we compare four text-conditioning variants, (i) \textit{Null}: no text prompt; (ii) \textit{Global}: baseline scenario where a single prompt is shared across all tiles; (iii) \textit{Local}: tile-specific prompts extracted per $\tiledxl$; (iv) \textit{Global+Local}: concatenation of the global prompt to each tile-specific local prompt.

Quantitative results show that providing tiled (local) prompts as additional guidance consistently improves image and video quality compared to the baseline of using a single global prompt.
Notably, for VSR, using \textit{only} the local prompts yields better performance than using \textit{both} global and local prompts, even though the global$+$local variant contains all text tokens present in the local variant.
This suggests that the gains in restoration quality cannot be achieved by simply adding more text tokens; rather, the \textit{context} and relevance of the conditioning text are what drive the improvement.
The substantial gains in video quality metrics (\textit{i.e.}, FasterVQA, Fast-VQA, DOVER) also support that tiled prompts help alleviate \textit{both} spatial misguidance and temporal misguidance in VSR.

These results are further supported by qualitative comparison of image super-resolution provided in Fig.~\ref{fig:qualitative-image} and video super-resolution provided in Fig.~\ref{fig:qualitative-video}.
Our method of using tiled prompts provides dense, localized textual guidance to each local tile $\tiledxl$, facilitating reconstruction of local details like texture or words.

\begin{figure}[t]
\centering
\begin{minipage}[t]{0.64\linewidth}\vspace{0pt}
    \captionof{table}{Quantitative comparison on reference metrics. Best results are in \textbf{bold}.}
    \label{tab:quant-image-reference}
    \begin{center}
    \resizebox{\textwidth}{!}{
    \begin{tabular}{ll|ccccc}
    \toprule
    Dataset & Prompt Type & PSNR$\uparrow$ & SSIM$\uparrow$ & LPIPS$\downarrow$ & DISTS$\downarrow$ & FID$\downarrow$ \\
    
    \midrule
    \multirow{3}{*}{LSDIR2048}
    & Global (Baseline)
    & 22.55 & 0.5973 & 0.3000 & 0.1462 & 17.09 \\
    & Global $+$ Local
    & 22.54 & 0.5980 & 0.2967 & 0.1445 & 17.98 \\
    & \cg Local
    & \cg \textbf{22.63} & \cg \textbf{0.6011} & \cg \textbf{0.2900} & \cg \textbf{0.1400} & \cg \textbf{16.42} \\

    \midrule
    \multirow{3}{*}{SEPE8K}
    & Global (Baseline)
    & \textbf{23.00} & \textbf{0.6358} & 0.2852 & 0.1324 & 26.00 \\
    & Global $+$ Local
    & 22.97 & 0.6332 & 0.2844 & 0.1305 & 25.84 \\
    & \cg Local
    & \cg 22.96 & \cg 0.6335 & \cg \textbf{0.2809} & \cg \textbf{0.1276} & \cg \textbf{24.16} \\
    
    \bottomrule
    \end{tabular}
    }
    \end{center}
\end{minipage}\hfill
\begin{minipage}[t]{0.34\linewidth}\vspace{0pt}
    \captionof{table}{Inference time with latent tiling (seconds). The number of tiles are denoted inside parentheses.}
    \label{tab:runtime}
    \begin{center}
    \resizebox{0.75\textwidth}{!}{
    \begin{tabular}{lc}
        \toprule
        Method & Time \\
        \midrule
        DiT4SR & 162.58 \\
        $+$ Tiled Prompts (25) \quad & 166.15 \\
        \midrule
        STAR & 1273.9 \\
        $+$ Tiled Prompts (12) \quad & 1348.3 \\
        \bottomrule
    \end{tabular}
    }
    \end{center}
\end{minipage}
\end{figure}

\subsubsection{Prompt Quality Evaluation via Image/Video-Text Alignment.}
To validate that our framework alleviates prompt misguidance, we also measure image-text alignment between the input LR tiles $\tiledxl$ and the corresponding text prompts used for guidance (either $\vcglobal$ or $\vclocal$) in Tab~\ref{tab:quant-image}.
The video-text alignment results between the input spatio-temporal LR blocks $\tiledxl$ and the corresponding text prompts are provided in Tab~\ref{tab:quant-video}.
For image super-resolution we use the image-text alignment metrics CLIPScore~\cite{hessel2021clipscore}, ImageReward~\cite{xu2024imagereward}, HPSv2~\cite{wu2023human} to assess the relevance of the conditioning text to the local visual content, and for video super-resolution we use the metrics LanguageBind~\cite{zhu2024languagebind} and VQAScore\cite{lin2024evaluating} with LLaVA-OneVision~\cite{li2024llavaonevision}.

We compare across the three text-conditioning variants, excluding the null prompt case where alignment with text can not be measured. The baseline scenario uses a single global text prompt from a global extractor for all tiles, while our method of tiled prompts uses localized prompts for each tile.
Results show significant increase in alignment between local content and textual description when using local tile-specific prompts,
supporting that the local prompts indeed hold high relevance to each input LR tile.

\subsubsection{Reference-based Evaluation.}

As the main focus of our work is on mitigating prompt misguidance during tiled inference, we work on very high resolutions, making reference-based evaluation with high-resolution ground truth images difficult. Nonetheless, we provide quantitative comparison of reference-based metrics in Tab.~\ref{tab:quant-image-reference}, which verify the effectiveness of our method in improving both fidelity and perceptual quality.
Furthermore, the decrease in FID supports that images restored with tiled prompts better match the true image distribution.

\subsection{Further Discussion}

\subsubsection{Runtime Analysis.}

Tab.~\ref{tab:runtime} shows that the additional computation introduced by our framework incurs negligible overhead relative to the overall super-resolution inference cost. Consequently, the observed improvements in image and video quality justify the extra computation, making the trade-off favorable.

\subsubsection{Ablation over CFG Scales.}

We perform ablation over CFG scales $s$ to evaluate how performance changes as guidance strength increases. As shown in Fig.~\ref{fig:cfg-ablation}, our tiled prompts framework becomes progressively more effective as $s$ increases, yielding larger gains over the global prompt baseline. This trend suggests that stronger guidance amplifies prompt related issues for global prompts, while tiled prompts provide conditioning that remains robust and beneficial under high CFG.
Such results are consistent with our observation in \eqref{eqn:deltac_cfg} that the misguidance $s\deltac$ shifts the generative trajectory from the ideal direction. As CFG scale $s$ increases, the overall misguiding effects of $s\deltac$ are exacerbated, leading to higher discrepancy between using local prompts and global prompts.

\begin{figure}[!t]
    \centering
    \includegraphics[width=\linewidth]{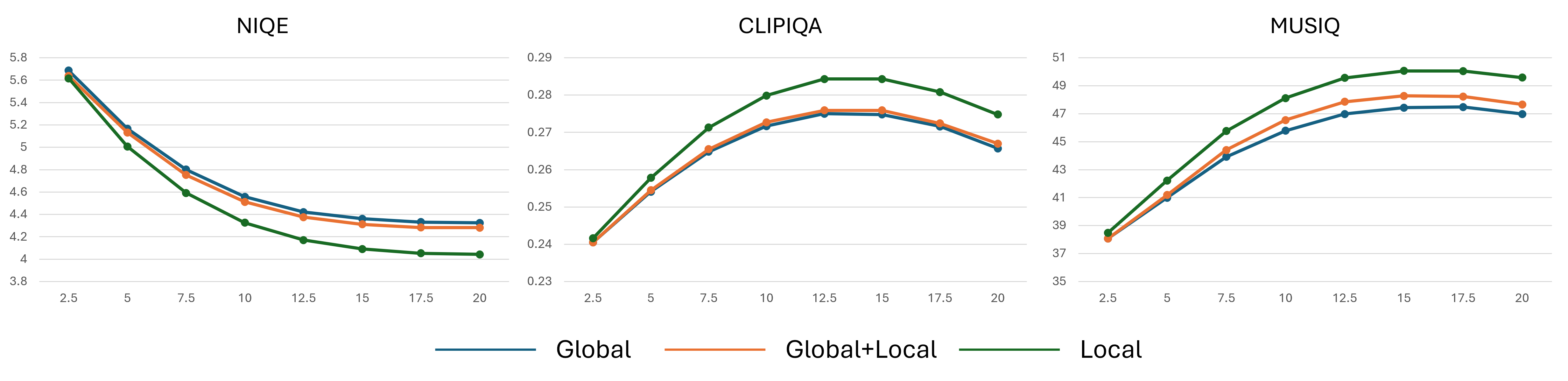}
    \caption{
    NIQE (lower is better), CLIPIQA and MUSIQ (higher is better) are reported across varying CFG scales, showing that tiled (local) prompts yield increasingly stronger improvements over the global prompt baseline as CFG increases.
    }
    \label{fig:cfg-ablation}
\end{figure}

\section{Conclusion}
\label{sec:conclusion}

We identified \textit{prompt misguidance} as a key failure mode of text-conditioned super-resolution under the latent tiling strategy, where a single global prompt is both too coarse to resolve tile-level ambiguity and prone to injecting irrelevant semantics. To address this, we proposed \textit{Tiled Prompts}, which replaces global text conditioning with tile-specific prompts to provide localized guidance for each latent tile. Across both SISR and VSR, tiled prompts improve perceptual quality and image/video-text alignment while introducing negligible computational overhead, offering a practical and unified solution for SR at high resolutions.

\subsubsection{Limitations.}
Our framework relies heavily on the performance of the prompt-extraction VLM, but VLMs can misinterpret local content or hallucinate details. Thus, an important direction for future work is to improve the robustness and reliability of VLM-based prompt extraction.

\clearpage

%
%
\bibliographystyle{splncs04}
\bibliography{main}

\clearpage
\appendix
\section{Proofs}
\label{sec-appendix:proof}

\lemmamisguidance*
\begin{proof}
    For an $\epsilon$-prediction model, the model output is 
    \begin{equation*}
        \epsilonb_\theta(\tiledxt, t) = -\sigma_t \nabla_\tiledxt\log{p_t(\tiledxt)}.
    \end{equation*}
    Substituting this into \eqref{eqn:deltac_def} directly yields $w(t) = \sigma_t$.\\
    
    For an $\vx_0$-prediction model, the model directly predicts the clean data $\tiledxzero$.\\
    From the forward process, 
    \begin{align*}
        \tiledxzero &= \frac{1}{\alpha_t}(\tiledxt - \sigma_t \epsilonb) \\
        &= \frac{1}{\alpha_t}\tiledxt + \frac{\sigma_t^2}{\alpha_t}\nabla_\tiledxt\log{p_t(\tiledxt)}.
    \end{align*}
    This results in
    \begin{align*}
         \deltac = \frac{\sigma_t^2}{\alpha_t} \left(\nabla_\tiledxt\log{p_t(\tiledxt |\tiledxl,\idealc)} - \nabla_\tiledxt\log{p_t(\tiledxt |  \tiledxl, \vc)} \right),
    \end{align*}
    and thus $w(t) = \frac{\sigma_t^2}{\alpha_t}$.\\
    
    For a $\vv$-prediction model, the target for prediction is $\vv_\theta(\tiledxt,t) = \alpha_t \epsilonb - \sigma_t \tiledxzero$. Substituting $\tiledxzero = \frac{1}{\alpha_t}(\tiledxt - \sigma_t \epsilonb)$ and $\epsilonb = -\sigma_t \nabla_\tiledxt\log{p_t(\tiledxt)}$ gives:
    \begin{equation}
        \vv_\theta(\tiledxt,t) = -\frac{\sigma_t}{\alpha_t}\left(\tiledxt + \nabla_\tiledxt\log{p_t(\tiledxt)}\right).
    \end{equation}
    By taking the difference between conditioning on $\vc$ and $\idealc$,
    \begin{align*}
         \deltac = \frac{\sigma_t}{\alpha_t} \left(\nabla_\tiledxt\log{p_t(\tiledxt |\tiledxl,\idealc)} - \nabla_\tiledxt\log{p_t(\tiledxt |  \tiledxl, \vc)} \right),
    \end{align*}
    resulting in $w(t) = \frac{\sigma_t}{\alpha_t}$.\\
    
    For linear flow-based models, $\alpha_t = 1-t$ and $\sigma_t = t$. The vector field is 
    \begin{equation}
        \vv_\theta(\tiledxt,t) = \mathbb{E}\left[\frac{\tiledxt-\tiledxzero}{t}\mid\tiledxt\right].
    \end{equation}
    By Tweedie's formula, the posterior expectation is 
    \begin{equation}
        \mathbb{E}[\tiledxzero\mid\tiledxt] = \frac{\tiledxt + t^2 \nabla_\tiledxt\log p_t(\tiledxt)}{1-t}.
    \end{equation}
    Substituting this into the vector field yields:
    \begin{equation}
        \vv_\theta(\tiledxt,t) = -\frac{1}{1-t}\left( \tiledxt + t\nabla_\tiledxt\log p_t(\tiledxt) \right).
    \end{equation}
    Therefore,
    \begin{equation}
        \deltac =\frac{t}{1-t}\left( \nabla_\tiledxt\log{p_t(\tiledxt |\tiledxl,\idealc)} - \nabla_\tiledxt\log{p_t(\tiledxt |  \tiledxl, \vc)} \right).
    \end{equation}
    Since $\alpha_t = 1-t$ and $\sigma_t = t$, this precisely matches $\frac{\sigma_t}{\alpha_t}$.
    \qed
\end{proof}

\propmutual*
\begin{proof}
    By the chain rule of conditional mutual information,
    \begin{align}
    I(\tiledxh;\idealc,\vc\mid \tiledxl)
    &= I(\tiledxh;\vc\mid \tiledxl) + I(\tiledxh;\idealc\mid \tiledxl,\vc), \label{eq:chain_a}\\
    I(\tiledxh;\idealc,\vc\mid \tiledxl)
    &= I(\tiledxh;\idealc\mid \tiledxl) + I(\tiledxh;\vc\mid \tiledxl,\idealc). \label{eq:chain_b}
    \end{align}
    Subtracting \eqref{eq:chain_a} from \eqref{eq:chain_b} yields
    \begin{equation}
    I(\tiledxh;\idealc\mid \tiledxl)-I(\tiledxh;\vc\mid \tiledxl)
    = I(\tiledxh;\idealc\mid \tiledxl,\vc) - I(\tiledxh;\vc\mid \tiledxl,\idealc).
    \end{equation}
    Using \eqref{eqn:cond_indep}, we have $I(\tiledxh;\vc\mid \tiledxl,\idealc)=0$, hence
    \begin{equation}
        \Delta I=I(\tiledxh;\idealc\mid \tiledxl,\vc).
    \end{equation}
    Finally, the conditional mutual information admits the standard KL form:
    \begin{align}
        I(\tiledxh;\idealc\mid \tiledxl,\vc)
        &=
        \mathbb{E}_{\idealc,\vc|\tiledxl}\!\left[
        D_{\mathrm{KL}}\!\Big(
        p(\tiledxh\mid \tiledxl,\idealc,\vc)\ \big\|\ p(\tiledxh\mid \tiledxl,\vc)
        \Big)\right] \nonumber\\
        &=
        \mathbb{E}_{\idealc,\vc|\tiledxl}\!\left[
        D_{\mathrm{KL}}\!\Big(
        p(\tiledxh\mid \tiledxl,\idealc)\ \big\|\ p(\tiledxh\mid \tiledxl,\vc)
        \Big)\right],
    \end{align}
    where the last equality uses \eqref{eqn:cond_indep}, ultimately proving \eqref{eqn:DeltaIKL}.

    Fix any realization $\idealc\sim p^*(\idealc|\tiledxl)$ and $\vc\sim p(\vc|\tiledxl)$.
    For any unified Gaussian marginal $p_t$, there exists an equivalent forward SDE:
    \begin{align*}
        d\tiledxt = f(\tiledxt, t)dt + g(t)d\vw_t.
    \end{align*}
    By Anderson's theorem~\cite{anderson1982reverse}, the corresponding reverse SDE can be written as
    \begin{align*}
        d\tiledxt = \{f(\tiledxt, t)-g(t)^2\nabla_\tiledxt \log p_t(\tiledxt)\}dt + g(t)d\vw_t.
    \end{align*}
    Thus, the reverse-time SDEs conditioned on $(\tiledxl,\idealc)$ and $(\tiledxl, \vc)$ share the same diffusion coefficient $g(t)$ but differ in their drift terms, driven by \\ 
    $\nabla_{\tiledxt}\log p_t(\tiledxt \mid \tiledxl, \idealc)$ and $\nabla_{\tiledxt}\log p_t(\tiledxt \mid \tiledxl, \vc)$, respectively.
    
    Let $P^{\idealc}$ and $P^{\vc}$ denote the corresponding reverse-time path measures, and let
    $p_0^{\idealc}$ and $p_0^{\vc}$ be their terminal marginals at $t=0$ (\textit{i.e.}, distributions of $\tiledxh$).
    By the data processing inequality,
    \begin{equation}
        D_{\mathrm{KL}}(p_0^{\idealc}\parallel p_0^{\vc})
        \le
        D_{\mathrm{KL}}(P^{\idealc}\parallel P^{\vc}).
        \label{eq:dpi}
    \end{equation}
    Using Girsanov's theorem and Lemma~\ref{lemma:misguidance}, the KL divergence between the two path measures satisfies
    \begin{align*}
        &D_{KL}(P^\idealc \parallel P^\vc) \\
        &= \frac{1}{2} \int_0^T \mathbb{E}_{P^\idealc} \left[ g(t)^2 \| \nabla_\tiledxt \log p_t(\tiledxt|\tiledxl,\idealc) - \nabla_\tiledxt \log p_t(\tiledxt|\tiledxl, \vc) \|^2 \right] dt\\
        &=\frac{1}{2} \int_0^T \frac{g(t)^2}{w(t)^2} \; \mathbb{E}_{P^*} \left[\| \deltac(t) \|^2\right]dt.
    \end{align*}
    Substituting this into \eqref{eq:dpi}, we have
    \begin{equation}
        D_{\mathrm{KL}}(p_0^{\idealc}\parallel p_0^{\vc})
        \le
        \frac{1}{2}\int_0^T \frac{g(t)^2}{w(t)^2}\;
        \mathbb{E}_{\tiledxt\sim p_t(\tiledxt\mid \tiledxl,\idealc)}\!\left[\|\deltac(t)\|^2\right]dt.
        \label{eq:fixed-draw-bound}
    \end{equation}
    Taking expectation of \eqref{eq:fixed-draw-bound} over $\idealc\sim p^*(\idealc|\tiledxl)$, $\vc\sim p(\vc|\tiledxl)$ and using linearity of expectation yields
    \begin{align*}
        \mathbb{E}_{(\idealc,\vc)\mid \tiledxl}\!&\left[D_{\mathrm{KL}}(p_0^{\idealc}\parallel p_0^{\vc})\right]\\
        \le
        &\frac{1}{2}\int_0^T \frac{g(t)^2}{w(t)^2}\;
        \mathbb{E}_{(\idealc,\vc)\mid \tiledxl}\;
        \mathbb{E}_{\tiledxt\sim p_t(\tiledxt\mid \tiledxl,\idealc)}\!\left[\|\deltac(t)\|^2\right]dt.
    \end{align*}
    Combining this with \eqref{eqn:DeltaIKL} (which identifies $\Delta I$ with the expected posterior KL) proves the desired lower bound of accumulated prompt misguidance.
    \qed
\end{proof}

\proptiled*

\begin{proof}
Fix an arbitrary realization $\{\vc_\text{local}^{(j)}\}_{j=1}^N$ and $\vcglobal$. 
Let us denote the set of tile-wise concepts $\mathcal{C}_{\mathrm{Local}}=\{\vc_\text{local}^{(1)},\dots,\vc_\text{local}^{(N)}\}$ which sufficiently specifies all tiles.
Assume the global prompt $\vcglobal$ is semantically incomplete and only spans a subset of these concepts. Let $S_g\subseteq\{1,\dots,N\}$ be the index set of concepts that are effectively represented in $\vcglobal$.
To model \emph{errors of omission}, we introduce an additional omitted condition $\vc_\emptyset$ representing the ambiguous fallback state when the relevant tile-specific concept is not available.

We introduce a latent variable $Z\in S_g\cup\{\emptyset\}$ indicating which semantic concept is effectively attended to when generating the $i$-th tile under $\vcglobal$.
Specifically, $Z=j$ means attending to $\vc_\text{local}^{(j)}$ (for $j\in S_g$), while $Z=\emptyset$ means the model falls back to $\vc_\emptyset$.
Then, for each fixed LR tile $\tiledxl$, we model the globally-guided conditional as the following mixture:
\begin{align}
    \label{eq:mixture-global}
    p(\tiledxh \mid \tiledxl, \vcglobal)
    = \sum_{z\in S_g\cup\{\emptyset\}} &w_{i,z}(\tiledxl;\vcglobal)\, p(\tiledxh \mid \tiledxl, \vc_z),\\
    \sum_{z} w_{i,z}(\tiledxl;\vcglobal)=1,\ \ &w_{i,z}(\tiledxl;\vcglobal)\ge 0,
\end{align}
where $\vc_z\!=\!\vc_\text{local}^{(z)}$ if $z\in S_g$ and $\vc_{\emptyset}$ if $z=\emptyset$.

Since entropy $H(p)=-p\log p$ is concave in the underlying distribution, applying Jensen's inequality to \eqref{eq:mixture-global} gives
\begin{equation}
    \label{eq:entropy-jensen}
    H(\tiledxh \mid \tiledxl,\vcglobal)
    \;\ge\;
    \sum_{z\in S_g\cup\{\emptyset\}} w_{i,z}(\tiledxl;\vcglobal)\, H(\tiledxh \mid \tiledxl,\vc_z).
\end{equation}

Assume the tile-specific concept $\vclocal$ is the most informative condition for tile $i$ in the sense that
\begin{align}
    \label{eq:entropy-min}
    H(\tiledxh \mid\tiledxl,\vclocal)
    \;&\le\;
    H(\tiledxh \mid \tiledxl,\vc_\text{local}^{(j)})\quad \forall j\neq i,\\
    H(\tiledxh \mid \tiledxl,\vclocal)
    \;&\le\;
    H(\tiledxh \mid \tiledxl,\vc_\emptyset). \label{eq:entropy-min_2}
\end{align}
The first inequality captures \emph{errors of commission} (irrelevant concepts $j\neq i$ increase ambiguity for tile $i$),
and the second captures \emph{errors of omission} (the fallback $\vc_\emptyset$ lacks high-frequency guidance and increases ambiguity).\\
Combining \eqref{eq:entropy-jensen}, \eqref{eq:entropy-min} and \eqref{eq:entropy-min_2} yields,
\begin{align}
    \label{eq:pointwise-entropy-order}
    H(\tiledxh \mid \tiledxl,\vcglobal)
    \;&\ge\;
    \sum_{z} w_{i,z}(\tiledxl;\vcglobal)\, H(\tiledxh \mid \tiledxl,\vclocal)\nonumber\\
    \;&=\;
    H(\tiledxh \mid \tiledxl,\vclocal). 
\end{align}
Since the realization $(\vcglobal,\{\vc_\text{local}^{(j)}\}_{j=1}^N)$ was arbitrary, \eqref{eq:pointwise-entropy-order} holds
\emph{almost surely} with respect to the randomness of the prompting scheme.

Taking expectation of \eqref{eq:pointwise-entropy-order} over the joint law of prompts conditioned on $\tiledxl$ gives
\begin{equation}
    \label{eq:entropy-order-expected}
    \mathbb{E}_{\vclocal, \vcglobal |\tiledxl}\big[\,H(\tiledxh \mid \tiledxl, \vcglobal) \big]
    \;\ge\;
    \mathbb{E}_{\vclocal, \vcglobal |\tiledxl}\big[\,H(\tiledxh \mid \tiledxl, \vc_\text{local}^{(i)}) \big].
\end{equation}

By definition,
$I(\tiledxh;\vc|\tiledxl)=H(\tiledxh|\tiledxl)-\mathbb{E}_{\vc|\tiledxl}[H(\tiledxh|\tiledxl,\vc)].$
Because $H(\tiledxh|\tiledxl)$ does not depend on the prompting scheme,
\eqref{eq:entropy-order-expected} implies
\begin{equation}
    I(\tiledxh;\vc_\text{local}^{(i)}\mid\tiledxl)\;\ge\; I(\tiledxh;\vcglobal\mid\tiledxl).
\end{equation}
Recalling $\Delta I_\ell= I(\tiledxh;\idealc|\tiledxl)-I(\tiledxh;\vclocal|\tiledxl)$ and $\Delta I_\textrm{g}= I(\tiledxh;\idealc|\tiledxl)-I(\tiledxh;\vcglobal|\tiledxl)$, where the first term
is identical for both schemes, we obtain
\begin{equation}
\Delta I_\ell\;\le\;\Delta I_\textrm{g},
\end{equation}
which concludes the proof.
\qed
\end{proof}

\section{Experimental Details}
\label{sec-appendix:exp-details}

\subsection{Model Checkpoints}
We use the pretrained VLM models Qwen2.5-VL-7B-Instruct and Qwen3-VL-8B-Instruct, available at \url{https://huggingface.co/Qwen/Qwen2.5-VL-7B-Instruct} and \url{https://huggingface.co/Qwen/Qwen3-VL-8B-Instruct}, respectively.
For the image super-resolution model, we use the pretrained checkpoint of DiT4SR available at \url{https://github.com/Adam-duan/DiT4SR}, specifically the `dit4sr\_q' checkpoint that was used in the original paper. For the video super-resolution model, we use the pretrained checkpoint of STAR at \url{https://github.com/NJU-PCALab/STAR}.

In Sec.~\ref{sec-appendix:quant} below, we provide additional quantitative comparison for SISR and VSR on various backbone models. Specifically, we use the pretrained checkpoint of OSEDiff available at \url{https://github.com/bryanswkim/Chain-of-Zoom}, the pretrained checkpoint of DiffVSR available at \url{https://github.com/xh9998/DiffVSR}, and the pretrained checkpoint of Upscale-A-Video available at \url{https://github.com/sczhou/Upscale-A-Video}.

\subsection{User Prompts}
\subsubsection{Image Super-Resolution.} The user prompt used by the VLM for generating both global and local prompts is as follows:
\begin{tcolorbox}[
    colback=black!5!white,
    colframe=black!75!white,
    fonttitle=\bfseries,
    breakable,
    arc=1mm,
    boxsep=1mm,
    left=2mm,
    right=2mm,
    top=2mm,
    bottom=2mm,
    fontupper=\scriptsize,
]
    \raggedright
    What is in this image? Give me a set of words.
\end{tcolorbox}

\subsubsection{Video Super-Resolution.} 
As stated in the main paper, one of our contributions is empirically setting up a prompt-extraction system that effectively extracts global and local prompts from a given video. Unlike for images, the video understanding capabilities of VLMs are yet substandard, requiring us to provide the VLM with more explicit and detailed instructions to generate highly descriptive keywords. First, to extract the global prompt $\vcglobal$ we provide the following instruction to the VLM:
\begin{tcolorbox}[
    colback=black!5!white,
    colframe=black!75!white,
    fonttitle=\bfseries,
    breakable,
    arc=1mm,
    boxsep=1mm,
    left=2mm,
    right=2mm,
    top=2mm,
    bottom=2mm,
    fontupper=\scriptsize,
]
    \raggedright
    Analyze this video with extreme focus and provide an exhaustive list of keywords.\\
    CRITICAL GUIDELINES:\\
    1. OCR \& Text Extraction: Transcribe EVERY visible character (digits, signage, logos, license plates, small labels). If it is readable, it MUST be in the list.\\
    2. Transient Objects: Identify objects that appear only for a split second or in the background(e.g., `raindrops'). Do not ignore fleeting details.\\
    3. Object Parts: Identify small components (e.g., `rivets', `bolts', `hinges', `seams', `cracks').\\
    4. Fine-grained Material Classification: Do not use vague words like `textured' or `smooth'.\\
    Instead, identify the EXACT material: (e.g., `anodized aluminum', `reinforced concrete', `tempered glass', `corrugated iron', `synthetic plastic', `terrazzo floor').\\
    Format: Output ONLY keywords separated by commas. No explanations, no `Here are the words', no `Note:'. NO generic adjectives like `nice', `clear', or `textured'.
\end{tcolorbox}

Then, we extract the tiled prompt $\vclocal$ for a specific local tile by using the following instruction:
\begin{tcolorbox}[
    colback=black!5!white,
    colframe=black!75!white,
    fonttitle=\bfseries,
    breakable,
    arc=1mm,
    boxsep=1mm,
    left=2mm,
    right=2mm,
    top=2mm,
    bottom=2mm,
    fontupper=\scriptsize,
]
    \raggedright
    The second video is a low-resolution crop (bicubic upsampled) of the first video. \\
    It may appear blurry due to upsampling, but you must ignore the blur.\\
    Compare both and describe the content of the SECOND video (the patch) with extreme detail:\\
    1. Intentional Texture: Based on the object's identity, what is its actual material?\\
    2. Micro-OCR: Transcribe any letters, numbers, or symbols that are unique to this patch.\\
    3. Edge \& Shape: Describe the intended sharp edges and structures of the objects in the patch.\\
    STRICT RULE: NEVER use words like `blurry', `pixelated', `noisy', `low-res', or `distorted'. \\
    Output ONLY the inferred high-quality keywords, separated by commas.
\end{tcolorbox}

\subsection{Other Settings.}

\subsubsection{Settings for Runtime Analysis.}
Runtime analysis of SISR and VSR tasks reported in Tab.~\ref{tab:runtime} was conducted with NVIDIA GeForce RTX 3090 GPUs.

\subsubsection{Settings for CFG Scale Ablation.}
The ablation over CFG scales reported in Fig.~\ref{fig:cfg-ablation} was performed using STAR~\cite{xie2025star} as the backbone architecture. Eight CFG scales $s \in \{2.5, 5.0, 7.5, 10.0, 12.5, 15.0, 17.5, 20.0\}$ were evaluated to thoroughly observe the trend of prompt misguidance under varying guidance strengths.

\section{Additional Quantitative Results}
\label{sec-appendix:quant}

We verify the model-agnostic behavior of our proposed framework by providing additional quantitative results on various pretrained super-resolution backbones, for both image and video super-resolution.

\subsubsection{Image Super-Resolution.}
We provide additional quantitative comparison for SISR using the OSEDiff~\cite{wu2024one} model in Tab.~\ref{tab:quant-image-osediff}. As the OSEDiff model is trained on the LSDIR~\cite{li2023lsdir} and FFHQ~\cite{karras2019style} datasets, evaluation is performed on the training datasets of DIV2K~\cite{agustsson2017ntire} and DIV8K~\cite{gu2019div8k}, consisting of 800 images and 1500 images respectively.
Results show that using local (tiled) prompts significantly improve performance regarding image quality and image-text alignment over the global prompt baseline.

\subsubsection{Video Super-Resolution.}
We provide additional quantitative comparison for VSR using the DiffVSR~\cite{li2025diffvsr} model in Tab.~\ref{tab:quant-video-diffvsr} and the Upscale-A-Video~\cite{zhou2023upscale} model in Tab.~\ref{tab:quant-video-uav}. As for the STAR~\cite{xie2025star} model, the VideoLQ~\cite{chan2021investigating}, RealVSR~\cite{yang2021real}, and MVSR~\cite{wang2022benchmark} datasets are used for evaluation.
Results show that using local (tiled) prompts significantly improve performance regarding frame-wise quality, video quality, and video-text alignment over the global prompt baseline.

\section{Additional Qualitative Results}
We provide additional qualitative comparisons in Figs.~\ref{fig:appendix-image}-\ref{fig:appendix-video}.
%

\begin{table}[t]
    \centering
    \caption{Quantitative comparison for SISR on image quality metrics and image-text alignment metrics experimented on the OSEDiff model. Best results are in \textbf{bold}, second-best results are \underline{underlined}.}
    \resizebox{0.8\textwidth}{!}{
    \begin{tabular}{ll|cccc|ccc}
    \toprule
    \multicolumn{2}{l}{} & \multicolumn{4}{c}{\textbf{Image Quality}} & \multicolumn{3}{c}{\textbf{Image-Text Alignment}} \\
    Dataset & Prompt Type & NIQE$\downarrow$ & MUSIQ$\uparrow$ & MANIQA$\uparrow$ & CLIPIQA$\uparrow$ & CLIP Score$\uparrow$ & ImageReward$\uparrow$ & HPSv2$\uparrow$ \\
    
    \midrule
    \multirow{4}{*}{DIV2K}
    & \multicolumn{1}{c|}{\xmark}
    & 3.7594 & 64.8993 & 0.5797 & 0.6586 & \xmark & \xmark & \xmark \\
    & Global (Baseline)
    & 3.7234 & 65.0033 & 0.5829 & 0.6578 & \underline{26.6749} & -1.4630 & 0.1685 \\
    & Global $+$ Local
    & \underline{3.6993} & \textbf{65.3359} & \textbf{0.5856} & \underline{0.6643} & 26.0289 & \underline{-1.3694} & \underline{0.1738} \\
    & \cg Local
    & \cg \textbf{3.6821} & \cg \underline{65.2489} & \cg \underline{0.5835} & \cg \textbf{0.6649} & \cg \textbf{27.1750} & \cg \textbf{-0.6806} & \cg \textbf{0.1998} \\

    \midrule
    \multirow{4}{*}{DIV8K}
    & \multicolumn{1}{c|}{\xmark}
    & 3.8123 & 64.6939 & 0.5819 & 0.6578 & \xmark & \xmark & \xmark \\
    & Global (Baseline)
    & 3.8098 & 65.0295 & \underline{0.5875} & 0.6608 & \underline{26.1589} & -1.6394 & 0.1603 \\
    & Global $+$ Local
    & \underline{3.7788} & \textbf{65.2291} & \textbf{0.5896} & \textbf{0.6650} & 25.6802 & \underline{-1.5527} & \underline{0.1664} \\
    & \cg Local
    & \cg \textbf{3.7597} & \cg \underline{65.1000} & \cg 0.5862 & \cg \underline{0.6646} & \cg \textbf{27.2356} & \cg \textbf{-0.7073} & \cg \textbf{0.2015} \\
    
    \bottomrule
    \end{tabular}
    }
    \label{tab:quant-image-osediff}
\end{table}

\begin{table}[t]
    \centering
    \caption{Quantitative comparison for VSR on frame-wise quality, video quality, and video-text alignment metrics experimented on the DiffVSR model. Best results are in \textbf{bold}, second-best results are \underline{underlined}.}
    \resizebox{1.0\textwidth}{!}{
    \begin{tabular}{ll|ccccc |ccc |cc}
    \toprule
    \multicolumn{2}{l}{} & \multicolumn{5}{c}{\textbf{Frame-wise Quality}} & \multicolumn{3}{c}{\textbf{Video Quality}} & \multicolumn{2}{c}{\textbf{Video-Text Alignment}} \\
    & & \multirow{2}{*}{NIQE$\downarrow$} & \multirow{2}{*}{MUSIQ$\uparrow$} & \multirow{2}{*}{MANIQA$\uparrow$} & CLIP & HYPER & Faster & FAST & \multirow{2}{*}{DOVER$\uparrow$} & Language & VQAScore \\
    Dataset & Prompt Type & & & & -IQA$\uparrow$ & -IQA$\uparrow$ & VQA$\uparrow$ & -VQA$\uparrow$ & & Bind$\uparrow$ & (LLAVA)$\uparrow$ \\
    
    \midrule
    \multirow{4}{*}{VideoLQ}
    & \multicolumn{1}{c|}{\xmark}
    & 4.2160 & 46.5694 & 0.4675 & 0.2851 & 0.3949 & 0.7404 & 0.7271 & 47.8760 & \xmark & \xmark \\
    & Global (Baseline)
    & 3.6857 & 57.4907 & 0.5362 & 0.3294 & 0.4841 & 0.7757 & 0.7576 & \underline{51.2854}  & 0.1752 & 0.3109 \\
    & Global $+$ Local
    & \underline{3.6321} & \underline{58.2223} & \underline{0.5408} & \underline{0.3315} & \underline{0.4903} & \underline{0.7765} & \underline{0.7589} & 51.1752 & \underline{0.1788} & \underline{0.3745} \\
    & \cg Local
    & \cg \textbf{3.5742} & \cg \textbf{58.7907} & \cg \textbf{0.5442} & \cg \textbf{0.3380} & \cg \textbf{0.4959} & \cg \textbf{0.7796} & \cg \textbf{0.7654} & \cg \textbf{51.5142} & \cg \textbf{0.1826} & \cg \textbf{0.4643} \\

    \midrule
    \multirow{4}{*}{RealVSR}
    & \multicolumn{1}{c|}{\xmark}
    & \textbf{3.5370} & 70.3216 & 0.6512 & 0.2864 & 0.5627 & 0.7802 & 0.7625 & 50.2731 & \xmark & \xmark \\
    & Global (Baseline)
    & 3.7112 & 73.4731 & \textbf{0.6790} & 0.3031 & 0.6240 & \underline{0.7858} & \textbf{0.7736} & 52.5130 & \underline{0.2012} & 0.5542 \\
    & Global $+$ Local
    & \underline{3.6647} & \underline{73.7893} & \underline{0.6781} & \underline{0.3061} & \underline{0.6306} & 0.7848 & 0.7674 & \underline{52.5698} & 0.2007 & \underline{0.5587} \\
    & \cg Local
    & \cg 3.6802 & \cg \textbf{73.9752} & \cg 0.6771 & \cg \textbf{0.3101} & \cg \textbf{0.6365} & \cg \textbf{0.7862} & \cg \underline{0.7686} & \cg \textbf{52.5852} & \cg \textbf{0.2061} & \cg \textbf{0.7079} \\

    \midrule
    \multirow{4}{*}{MVSR}
    & \multicolumn{1}{c|}{\xmark}
    & 4.9609 & 65.7228 & 0.5774 & 0.3705 & 0.5353 & 0.7874 & 0.7615 & 51.1141 & \xmark & \xmark \\
    & Global (Baseline)
    & \underline{4.5918} & 71.5383 & \underline{0.6071} & \underline{0.4087} & 0.5977 & 0.7968 & \underline{0.7798} & \underline{54.2107} & 0.2079 & 0.5309 \\
    & Global $+$ Local
    & \textbf{4.5717} & \underline{71.5397} & 0.6070 & 0.4060 & \underline{0.5991} & \underline{0.8019} & \textbf{0.7804} & 54.2080 & \underline{0.2094} & \underline{0.5777} \\
    & \cg Local
    & \cg 4.7488 & \cg \textbf{71.8988} & \cg \textbf{0.6156} & \cg \textbf{0.4220} & \cg \textbf{0.6110} & \cg \textbf{0.8054} & \cg 0.7770 & \cg \textbf{54.7193} & \cg \textbf{0.2107} & \cg \textbf{0.7023} \\
    
    \bottomrule
    \end{tabular}
    }
    \label{tab:quant-video-diffvsr}
\end{table}

\begin{table}[t]
    \centering
    \caption{Quantitative comparison for VSR on frame-wise quality, video quality, and video-text alignment metrics experimented on the Upscale-A-Video model. Best results are in \textbf{bold}, second-best results are \underline{underlined}.}
    \resizebox{1.0\textwidth}{!}{
    \begin{tabular}{ll|ccccc |ccc |cc}
    \toprule
    \multicolumn{2}{l}{} & \multicolumn{5}{c}{\textbf{Frame-wise Quality}} & \multicolumn{3}{c}{\textbf{Video Quality}} & \multicolumn{2}{c}{\textbf{Video-Text Alignment}} \\
    & & \multirow{2}{*}{NIQE$\downarrow$} & \multirow{2}{*}{MUSIQ$\uparrow$} & \multirow{2}{*}{MANIQA$\uparrow$} & CLIP & HYPER & Faster & FAST & \multirow{2}{*}{DOVER$\uparrow$} & Language & VQAScore \\
    Dataset & Prompt Type & & & & -IQA$\uparrow$ & -IQA$\uparrow$ & VQA$\uparrow$ & -VQA$\uparrow$ & & Bind$\uparrow$ & (LLAVA)$\uparrow$ \\
    
    \midrule
    \multirow{4}{*}{VideoLQ}
    & \multicolumn{1}{c|}{\xmark}
    & 5.8585 & 28.3939 & 0.3407 & 0.2101 & 0.2958 & 0.4628 & 0.4105 & 40.5028 & \xmark & \xmark \\
    & Global (Baseline)
    & 3.7983 & 51.9114 & 0.4917 & 0.3903 & 0.4527 & 0.7295 & 0.7006 & 49.9849  & 0.1789 & 0.3109 \\
    & Global $+$ Local
    & \underline{3.7511} & \underline{52.4725} & \underline{0.4966} & \underline{0.3936} & \underline{0.4556} & \underline{0.7313} & \underline{0.7074} & \underline{50.3215} & \underline{0.1808} & \underline{0.3404} \\
    & \cg Local
    & \cg \textbf{3.5909} & \cg \textbf{54.9391} & \cg \textbf{0.5123} & \cg \textbf{0.4230} & \cg \textbf{0.4774} & \cg \textbf{0.7593} & \cg \textbf{0.7298} & \cg \textbf{51.7112} & \cg \textbf{0.1868} & \cg \textbf{0.4842} \\

    \midrule
    \multirow{4}{*}{RealVSR}
    & \multicolumn{1}{c|}{\xmark}
    & \textbf{3.0595} & 63.0827 & 0.5755 & 0.3314 & 0.4584 & 0.7024 & \textbf{0.6529} & 40.5028 & \xmark & \xmark \\
    & Global (Baseline)
    & 3.4213 & \underline{71.2438} & \underline{0.6168} & \underline{0.4911} & \underline{0.6270} & \underline{0.7052} & \underline{0.6215} & \underline{44.0409} & \underline{0.2012} & \underline{0.5761} \\
    & Global $+$ Local
    & \underline{3.4205} & 71.2431 & 0.6167 & 0.4910 & \underline{0.6270} & \textbf{0.7060} & 0.6209 & 44.0249 & 0.2009 & 0.5585 \\
    & \cg Local
    & \cg 3.7017 & \cg \textbf{71.8502} & \cg \textbf{0.6205} & \cg \textbf{0.5252} & \cg \textbf{0.6453} & \cg 0.6931 & \cg 0.5973 & \cg \textbf{44.6735} & \cg \textbf{0.2051} & \cg \textbf{0.7256} \\

    \midrule
    \multirow{4}{*}{MVSR}
    & \multicolumn{1}{c|}{\xmark}
    & 5.7918 & 53.8781 & 0.4871 & 0.3550 & 0.4299 & 0.6148 & 0.5917 & 42.0209 & \xmark & \xmark \\
    & Global (Baseline)
    & \textbf{4.3328} & 71.1735 & \underline{0.5853} & 0.6186 & 0.6232 & \textbf{0.7812} & \underline{0.7319} & 52.0487 & 0.2082 & 0.5449 \\
    & Global $+$ Local
    & \underline{4.3665} & \underline{71.2803} & \underline{0.5853} & \underline{0.6242} & \underline{0.6257} & \underline{0.7731} & \textbf{0.7325} & \underline{52.0815} & \underline{0.2093} & \underline{0.5715} \\
    & \cg Local
    & \cg 4.6500 & \cg \textbf{71.9011} & \cg \textbf{0.5951} & \cg \textbf{0.6552} & \cg \textbf{0.6497} & \cg 0.7596 & \cg 0.6982 & \cg \textbf{53.2388} & \cg \textbf{0.2126} & \cg \textbf{0.7217} \\
    
    \bottomrule
    \end{tabular}
    }
    \label{tab:quant-video-uav}
\end{table}

\begin{figure}[h]
    \centering
    \includegraphics[width=\linewidth]{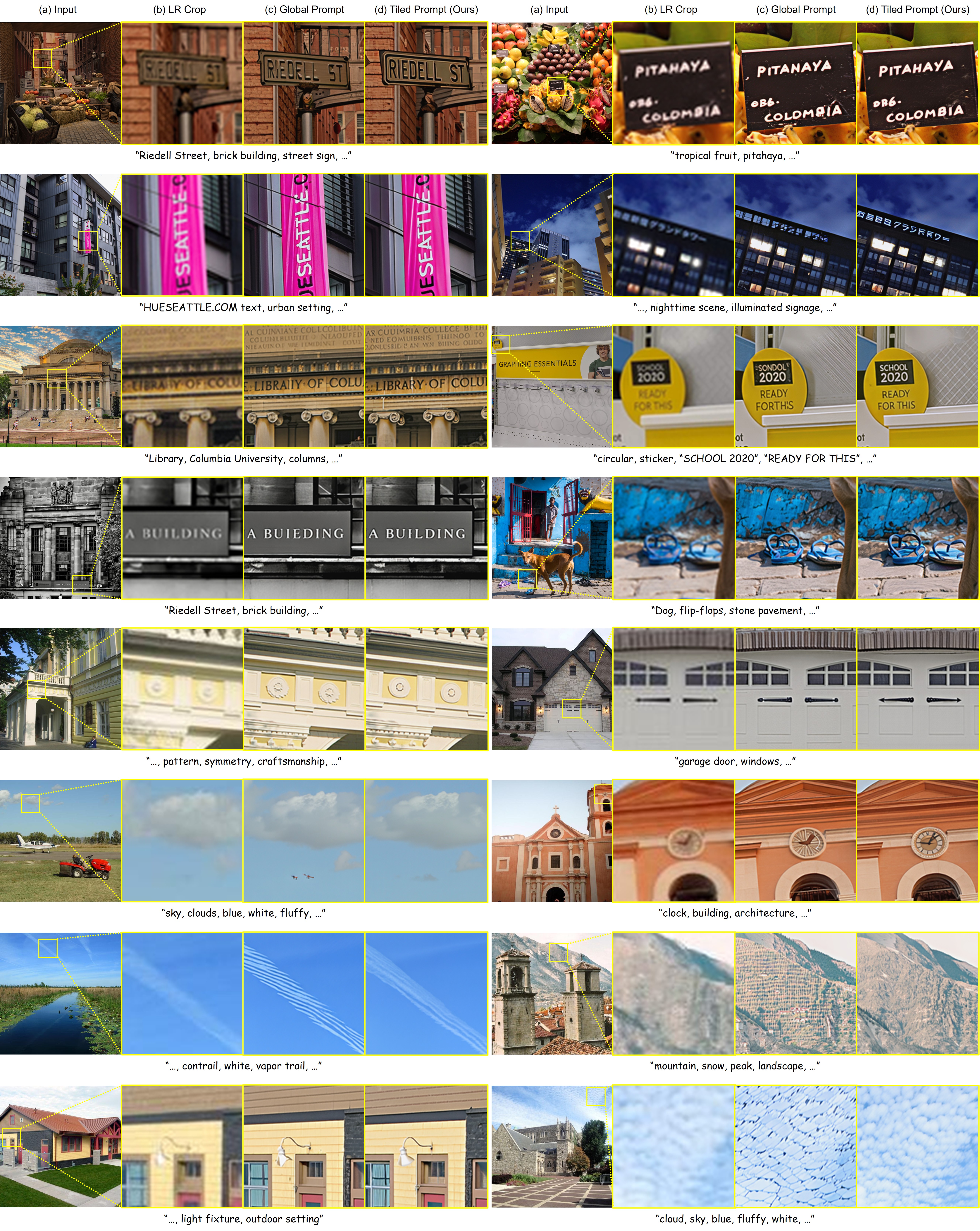}
    \caption{Additional qualitative results on the SISR model DiT4SR comparing \textbf{(a,b) the input image and its low-resolution crop}, \textbf{(c) SR results using the global prompt}, and \textbf{(d) SR results using the tiled prompt}. The text prompt below depicts a relevant part of the tiled prompt used to mitigate prompt misguidance and effectively guide the super-resolution process. }
    \label{fig:appendix-image}
\end{figure}

\begin{figure}[h]
    \centering
    \includegraphics[width=\linewidth]{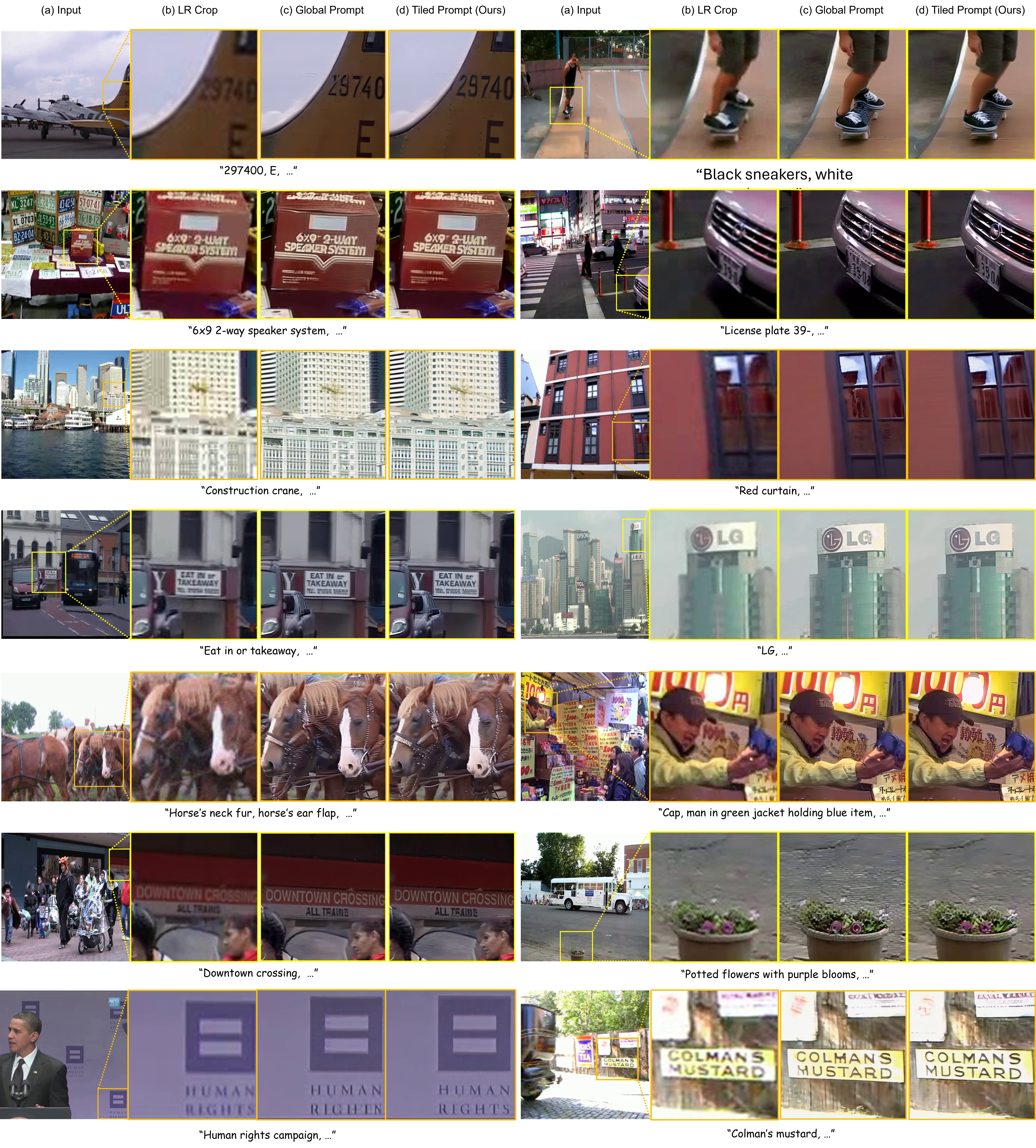}
    \caption{Additional qualitative results on the VSR model STAR comparing \textbf{(a,b) the input image and its low-resolution crop}, \textbf{(c) SR results using the global prompt}, and \textbf{(d) SR results using the tiled prompt}. The text prompt below depicts a relevant part of the tiled prompt used to mitigate prompt misguidance and effectively guide the super-resolution process. }
    \label{fig:appendix-video}
\end{figure}

\end{document}